\newcommand{\bibpath}{PCDNbib}
\definecolor{cite_color}{RGB}{0, 0, 255}
\definecolor{link_color}{RGB}{153, 0,0}  
\definecolor{url_color}{RGB}{153, 102,  0}
\definecolor{emp_color}{RGB}{0,0,255}
\newcommand{\algname}[1]{{{#1}}}
\newtheorem{lemma}{Lemma}
\newtheorem{theorem}{Theorem}
\def \X {\mathbf{X}}
\def \F {F}  
\begin{document}

\title{\LARGE
Parallel Coordinate Descent Newton Method\\ for Efficient
$L_1$-Regularized Loss Minimization\thanks{Source code is available at \url{https://github.com/bianan/ParallelCDN}}
}

\author{
\normalfont{An Bian}\\
\texttt{ybian@inf.ethz.ch}\\
ETH Zurich\\
\And
\normalfont{Xiong Li} \\
\texttt{li.xiong@gmail.com}\\
CNCERT\\
\AND
\normalfont{Yuncai Liu} \\
\texttt{whomliu@sjtu.edu.cn}\\
Shanghai Jiao Tong University\\
\And 
\normalfont{Ming-Hsuan Yang} \\
\texttt{mhyang@ucmerced.edu}\\
University of California,
Merced\\
}
\maketitle

\vspace{1em}

\begin{abstract}
	The recent years have witnessed advances in parallel algorithms for
	large scale optimization problems.
	Notwithstanding the demonstrated success, existing algorithms that
	parallelize over features
	are usually limited by divergence issues under high parallelism or
	require data preprocessing to alleviate these problems.
	In this work, we propose a Parallel Coordinate Descent algorithm using
	\textit{approximate}  Newton steps (PCDN) that is guaranteed to
	converge globally without data preprocessing.
	The key component of the PCDN algorithm 
	is the high-dimensional line search, which
	guarantees the global convergence with high parallelism.
	The PCDN algorithm randomly partitions the feature set into $b$
	subsets/bundles of size $P$, and sequentially processes each
	bundle by first computing the descent directions for each feature
	in parallel and then conducting $P$-dimensional line search
	to compute the step size. 
	We show that (i) the PCDN algorithm is
	guaranteed to converge globally despite increasing parallelism;
	(ii) the PCDN algorithm converges to the specified accuracy $\epsilon$ within the
	limited iteration number of $T_\epsilon$, and
	$T_\epsilon$ decreases with increasing parallelism.
	In addition, the data transfer and synchronization cost of
	the $P$-dimensional line search can be minimized by maintaining intermediate
	quantities.
	For concreteness,  the proposed PCDN  algorithm is applied to
	$L_1$-regularized logistic regression and $L_1$-regularized $L_2$-loss SVM problems. 
	Experimental evaluations on seven
	benchmark datasets show that the proposed PCDN algorithm 
	exploits parallelism well and outperforms the state-of-the-art
	methods.
\end{abstract}

\section{Introduction}
%
%
%
%
High dimensional $L_1$-minimization problems arise in a wide range of
applications including sparse logistic
regression~\cite{Ng:2004:FSL:1015330.1015435},
$L_1$-regularized support 
vector machine (SVM) classification~\cite{DBLP:journals/jmlr/ChangHL08},
image coding~\cite{Lee-ICML09},
and face recognition~\cite{Wright-PAMI09}.
To solve $L_1$-optimization problems efficiently,
several algorithms based on
coordinate gradient descent (CGD)~\cite{DBLP:journals/mp/TsengY09},
stochastic gradient~\cite{DBLP:conf/icml/Shalev-ShwartzT09}, interior point \cite{koh2007interior} and trust
region~\cite{Lin:1999:NML:588897.589173} have been developed,
among which the Coordinate
Descent Newton (CDN)~\cite{DBLP:journals/jmlr/YuanCHL10} 
and improved GLMNET~\cite{DBLP:conf/kdd/YuanHL11}  methods have
demonstrated promising results for $L_1$-regularized linear optimization problems.  
%
%

Within the $L_1$-optimization framework, large datasets with
high dimensional features entail scalable and efficient
parallel algorithms.
Several methods perform parallelization over
samples~\cite{Langford:2009:SOL:1577069.1577097, Langford:2009, 
	DBLP:dblp_conf/nips/ZinkevichWSL10, DBLP:conf/nips/RechtRWN11}
although usually there are more
features than samples in $L_1$-regularized problems.
Richt{\'a}rik et al.~\cite{DBLP:journals/corr/abs-1212-0873}
show
that randomized coordinate descent methods can be accelerated
by parallelization for solving Lasso problems, and the work is further
extended to distributed settings \cite{marecek2014distributed, richtarik2013distributed}.
In addition, Bradley et al.~\cite{DBLP:conf/icml/BradleyKBG11}
propose the Shotgun CDN (SCDN) method for $L_1$-regularized logistic
regression by directly parallelizing the updates of features based on
the CDN algorithm \cite{DBLP:journals/jmlr/YuanCHL10}.
However, the SCDN method is not guaranteed to
converge when the number of updated features in parallel is
greater than a threshold, and thereby limits its ability of exploiting
high parallelism.
While this problem can be alleviated by preprocessing samples
(e.g., feature clustering)  to achieve higher
parallelism~\cite{DBLP:conf/nips/ScherrerTHH12},
it requires additional computational overhead.
The \algname{Accelerated Shotgun} method  \cite{luo2014accelerated} is a 
first-order algorithm without backtrack line search which has fast
convergence.
However, it can only deal with the objective functions without  regularization terms. 
%
Scherrer et al. \cite{scherrer2012scaling} present a  generic framework 
for parallel coordinate descent methods which includes 
\algname{Shotgun}, \algname{Greedy}, \algname{Thread-Greedy} and \algname{Coloring}.
Their empirical convergence and  scalability tests do 
not favor any of these methods over the others, 
and no theoretical analysis is presented for the general framework. 

In \cite{DBLP:conf/pkdd/BianLCL13} Bian et al. present a high dimensional line search algorithm
to ensure global convergence while performing parallel coordinate
updates for $L_1$-regularized logistic regression problem.
While this method performs well, no analysis of convergence rate is presented. 
In this work, by further exploring the idea,  we propose a generalized Parallel Coordinate
Descent method using \textit{approximate} Newton steps (PCDN)  for generic 
$L_1$-optimization problems, 
and present thorough theoretical analysis on the proposed method. 


The contributions and novelty of this work are summarized as follows.
%
We present theoretical analysis on the upper bound of the
expected line search step in each iteration.
We analyze the iteration complexity of the proposed PCDN algorithm 
and show that, for any bundle size $P$ (i.e., parallelism), it is guaranteed to  converge to a
specified accuracy $\epsilon$ within $T_\epsilon$ iterations. 
The iteration number $T_\epsilon$ decreases with the
increasing of parallelism (bundle size $P$). 
In addition,  we show that in our implementation, the
$P$-dimensional line search does not need to access all  the training
data on each thread and the synchronization cost  of the
$P$-dimensional line search can be minimized.
%
%
Extensive experiments on $L_1$-regularized classification and regression problems 
with real-world datasets demonstrate that the proposed PCDN algorithm
is a highly parallelized approach with
guaranteed global convergence and fast convergence rate.

\section{$L_{1}$-Regularized Loss Minimization}
\label{sec:L1-Regularized Minimization}

For ease of presentation, we summarize the mathematical notations in
Table~\ref{tab:notation}.
\begin{table} [htbp]
	\caption{Mathematical notations in this work.}
	\label{tab:notation}
	\centering
	\begin{tabular}{|r|l|}
		\hline
		$s$, $n$&  \# training samples and \# features  \\
		\hline
		$i$,  $j$&   sample index and feature index  \\
		\hline
		$t$&  cumulative inner iteration index   \\
		\hline
		$k$  &  outer iteration index    \\
		\hline
		$\mathbf{e}_j$   &   indicator vector \\
		\hline
		$\|\cdot\|$, $\|\cdot\|_1$ & $2$-norm and $1$-norm \\
		\hline
		$\mathcal{N}$ &$\{1,2,...,n\}$, feature index set \\
		\hline
		$\mathcal{B}\subseteq \mathcal{N}$ & feature index subset or ``bundle"\\
		\hline
		$P=|\mathcal{B}|$ & bundle size  \\
		\hline
		$\mathbf{w}\in \mathbb{R}^n$ & unknown vector of model weights \\
		\hline
		$\mathbf{X}\in \mathbb{R}^{s\times n}$ & design matrix, whose $i$-{th}
		row is $\mathbf{x}_i$\\
		\hline
		$(\mathbf{x}_i, y_i)$ & sample-label pair\\
		\hline
	\end{tabular}
\end{table}

Consider an unconstrained $L_1$-regularized minimization problem over
a training set $\{(\mathbf{x}_i,y_i)\}_{i=1}^s$ with the
following general form:
\begin{equation}\label{equ:formal l1}
\begin{split}
\min_{\mathbf{w}\in \mathbb{R}^n} \F(\mathbf{w})& :=
\min_{\mathbf{w}\in \mathbb{R}^n}c\sum_{i=1}^s \varphi(\mathbf{w};
\mathbf{x}_i,y_i) +\|\mathbf{w}\|_1\\
&=\min_{\mathbf{w}\in \mathbb{R}^n} L(\mathbf{w}) +\|\mathbf{w}\|_1,
\end{split}
\end{equation}
where $L(\mathbf{w}) := c\sum_{i=1}^s \varphi(\mathbf{w};
\mathbf{x}_i,y_i)$ is the overall loss function; 
$\varphi(\mathbf{w}; \mathbf{x}_i,y_i)$ is a convex and non-negative
loss function; and $c>0$ is the regularization parameter.
For $L_{1}$-regularized logistic
regression, the  loss function is,
\begin{equation}\label{equ:lrloss}
\varphi_{\mathrm{log}}(\mathbf{w}; \mathbf{x}_i,y_i)=
\log(1+e^{-y_i\mathbf{w}^\top\mathbf{x}_i}),
\end{equation}
and for $L_{1}$-regularized $L_2$-loss SVM, the loss function is
\begin{equation}\label{equ:svmloss}
\varphi_{\mathrm{svm}}(\mathbf{w}; \mathbf{x}_i,y_i)= \max(0,1-y_i
\mathbf{w}^\top \mathbf{x}_i)^2.
\end{equation}
A number of algorithms have been proposed to solve these problems.
We discuss two related solvers based on 
Coordinate Descent Newton~\cite{DBLP:journals/jmlr/YuanCHL10} and its
parallel variant, Shotgun CDN~\cite{DBLP:conf/icml/BradleyKBG11}, in
this section.

\subsection{Coordinate Descent Newton}

Based on the Coordinate Gradient Descent
(CGD) method~\cite{DBLP:journals/mp/TsengY09},
Yuan et al.~\cite{DBLP:journals/jmlr/YuanCHL10}
demonstrate that the CDN method is efficient for solving large scale
$L_1$-regularized minimization.
The overall
procedure is summarized in Algorithm~\ref{alg:cdn}.
\begin{algorithm}[ht]
	\caption{CDN \cite{DBLP:journals/jmlr/YuanCHL10}}\label{alg:cdn}
	{initialize $\mathbf{w}^0=\mathbf{0}_{n\times 1}$}\;
	\For{$k=0,1,2,\cdots$}{
		\For{\textbf{all} $j\in \mathcal{N}$}{
			{compute  $d^{k}_j=d(\mathbf{w}^{k,j};j)$ by solving
				\eqref{equ d}}\; \label{cdn:dc}
			{find  $\alpha^{k,j} =\alpha(\mathbf{w}^{k,j}, d^{k}_j\mathbf{e}_j)$ by solving \eqref{equ:qrmijo}}\;
			\tcp{1-dimensional line search}
			\label{cdn:1dlinesearch} 
			{$\mathbf{w}^{k,j+1}\leftarrow \mathbf{w}^{k,j}+\alpha^{k,j}d^{k}_j\mathbf{e}_j$}\;
	}}
\end{algorithm}
Given the current model $\mathbf{w}$, for the selected feature $j\in
\mathcal{N}$, $\mathbf{w}$ is updated in the direction
$\mathbf{d}^j=d(\mathbf{w};j)\mathbf{e}_j$, where\footnote{For $L\mathbf{(w)}$ that is not $C^2$-smooth, e.g. the $L_{1}$-regularized $L_2$-loss SVM, use the generalized Hessian \citep{DBLP:journals/jmlr/YuanCHL10}, which is denoted by $\nabla^2L(\mathbf{w})$ with a little abuse of notation in this work.}
\begin{equation} \label{equ d}
d(\mathbf{w};j)\! :=\! \arg \min_{d} \{\nabla_j
L(\mathbf{w})d+\!\frac{1}{2}\nabla_{jj}^2L(\mathbf{w})d^2+|w_j+d|\},
\end{equation}
which has the following closed form solution,
\begin{equation}\label{equ: closed form solution}
d(\mathbf{w};j)=
\left\{\begin{array}{ll}
-\frac{\nabla _j L(\mathbf{w})\rm{+1}}{\nabla^2_{jj}L(\mathbf{w})} &
\textrm{if $\nabla _j L(\mathbf{w}) {\leq}
	\nabla^2_{jj}L(\mathbf{w})w_j$,}\\
-\frac{\nabla _j L(\mathbf{w})\rm{-}1}{\nabla^2_{jj}L(\mathbf{w})} &
\textrm{if $\nabla _j L(\mathbf{w}) {\geq}
	\nabla^2_{jj}L(\mathbf{w})w_j$,}\\
-w_j & \textrm{otherwise.}\\
\end{array} \right.
\end{equation}
\noindent
The Armijo rule~\cite{raey} is used to determine the step size.
Let $q$ be the line search step index, the
step size $\alpha=\alpha(\mathbf{w},\mathbf{d})$ is  determined by
\begin{equation}\label{equ:qrmijo}
\alpha(\mathbf{w},\mathbf{d}) :=
\max_{q=0,1,\cdots}\{\beta^q  | 
\F(\mathbf{w}+\beta^q\mathbf{d})-\F(\mathbf{w})\leq
\beta^q\sigma\Delta\},
\end{equation}
where $\beta\in(0,1),\sigma\in(0,1)$,
and 
\begin{equation}\label{equ:delta definition}
\Delta := \nabla L(\mathbf{w})^\top\mathbf{d}+ \gamma \mathbf{d}^\top
\mathbf{H}\mathbf{d}+ \|\mathbf{w}+\mathbf{d}\|_1-\|\mathbf{w}\|_1,
\end{equation}
where $\gamma \in [0,1)$ and
$\mathbf{H}=\mathrm{diag}(\nabla^2L(\mathbf{w}))$.

This rule requires function evaluations in each line search step, 
straightforward implementation would need to access the whole
design matrix $\X$ for each function evaluation,   which is 
intractable for parallel system with limited memory bandwidth. 
We will show in Section \ref{sec:implementation-pcdn} that, this problem can be solved using 
implementation technique of retaining intermediate quantities,
which also makes it possible to apply high-dimensional line search
to our parallel algorithm. 


\subsection{SCDN for $L_1$-Regularized Logistic Regression}\label{sec:scdn}

The SCDN method  ~\cite{DBLP:conf/icml/BradleyKBG11} is developed to solve $L_1$-regularized logistic
regression problems.
For presentation clarity, we
summarize the main steps of the SCDN method
in Algorithm~\ref{alg:scdn}.
This method first determines the parallelism (number of parallel
updates) $\bar{P}$, and then in each iteration updates the
randomly picked $\bar{P}$ features in parallel, where each feature
update corresponds to one iteration in the inner loop of the CDN
method (see Algorithm~\ref{alg:cdn}).
However, the parallel updates for $\bar{P}$
features increase the risk of divergence due to feature correlations.
Bradley et al.~\cite{DBLP:conf/icml/BradleyKBG11} provide a
problem-specific measure for the parallelization potential of the SCDN
method based on the spectral radius $\rho$ of $\mathbf{X}^\top\mathbf{X}$.
With this measure, an upper bound $\bar{P} \leq
{n}/{\rho}+1$, is given to achieve speedups linear in $\bar{P}$.
However, $\rho$ can be very large for most large scale
datasets (e.g., $\rho=20,228,800$ for the \textsf{gisette} dataset
with $n=5000$ without column-wise normalization) 
and thus limits the parallelizability of SCDN.
Clearly it is of great interest to develop algorithms with strong
convergence guarantees under high parallelism for large scale $L_1$-regularized 
minimization problems.
\begin{algorithm}[t]
	\caption{Shotgun CDN for logistic regression \cite{DBLP:conf/icml/BradleyKBG11}}\label{alg:scdn}
	{choose $\bar{P}\in [1,  n/\rho +1]$,  initialize $\mathbf{w}=\mathbf{0}_{n\times1}$}\;
	\While{not converged}{
		{\textbf{\textit{in parallel}} on $\bar{P}$ processors}\;
		{\hspace{0.5cm} choose $j\in \mathcal{N}$ uniformly at random}\;
		{\hspace{0.5cm} obtain
			$d_{j}=d(\mathbf{w};j)$ by solving~\eqref{equ d}}\;
		{\hspace{0.5cm} find  $\alpha^{j}=\alpha(\mathbf{w},{d}_j\mathbf{e}_j)$ by solving~\eqref{equ:qrmijo}\;\hfill}   \tcp{1-dimensional line search}
		{\hspace{0.5cm} $\mathbf{w}\leftarrow \mathbf{w}+\alpha^{j}{d}_{j}\mathbf{e}_j$}\;
	}
\end{algorithm}

\section{The Proposed PCDN Algorithm}
\label{sec: pcdn}

As described in Section \ref{sec:scdn}, the SCDN method is not guaranteed to converge when the number of features to be updated in parallel is greater than a threshold, i.e., $\bar{P}>{n}/{\rho}+1$. 
To exploit higher parallelism, we propose a coordinate descent algorithm 
using multidimensional \textit{approximate} Newton steps and high dimensional line search. 
When computing the multidimensional Newton descent direction of the second order approximation subproblem, 
we set the off-diagonal elements of the Hessian to zeros, such that we can compute the multidimensional approximate 
Newton descent direction by computing the corresponding one-dimensional Newton descent directions in parallel.
%

The main steps of the proposed PCDN method are summarized in Algorithm \ref{alg:pcdn}. 
In the $k$-th iteration of the outer loop, 
we randomly partition the feature index set $\mathcal{N}$ into $b$ disjoint subsets in 
a Gauss-Seidel manner,
\begin{equation}\label{equ:gauss-seidel}
\mathcal{N} = \mathcal{B}^{kb} \cup \mathcal{B}^{kb+1}\cup  \cdots \cup \mathcal{B}^{(k+1)b-1}, \; k=0,1,2,\ldots
\end{equation}
\noindent where $\mathcal{B}$ denotes a subset, i.e., a \textit{bundle}, in this work;
$P=|\mathcal{B}|$ is the bundle size;
and $b = \lceil\frac{n}{P}\rceil$ is the number of bundles partitioned from $\mathcal{N}$.
The PCDN algorithm sequentially processes each bundle by computing the approximate Newton descent direction  in each iteration of the inner loop.
In the $t$-th iteration\footnote{Note that $t$ is the cumulative iteration index, and refers to the inner loop in the following discussion of PCDN.}, the $P$-dimensional approximate Newton descent direction is computed by,
\begin{equation}\notag 
\mathbf{d}(\mathbf{w};\mathcal{B}^t) {\triangleq} 
\arg \min_{\mathbf{d}}  \left\{ \hspace{-1mm}
\nabla_{\mathcal{B}^t} L(\mathbf{w})\mathbf{d} \rm{+} 
\frac{1}{2}\mathbf{d}^T \mathbf{H}_{\mathcal{B}^t} \mathbf{d} \rm{+}
\|\mathbf{w}_{\mathcal{B}^t} \rm{+} \mathbf{d}\|_1 \hspace{-1mm} \right\},
\end{equation}
\noindent where we only use the diagonal elements of the Hessian, i.e.,  $\mathbf{H}_{\mathcal{B}^t} \triangleq \mathrm{diag}(\nabla_{\mathcal{B}^t}^2L(\mathbf{w}))$, to make the computing of one-dimensional Newton descent direction independent of each other and  enable the parallelization. 
\begin{algorithm}[t]
	\caption{PCDN algorithm}
	\label{alg:pcdn}
	{choose $P\in [1,n]$, initialize $\mathbf{w}^0=\mathbf{0}_{n\times 1}$\;}
	\For{$k=0,1,2,\cdots$ }{
		{$\{\mathcal{B}^{kb},\mathcal{B}^{kb+1},\cdots,\mathcal{B}^{(k+1)b-1}\} \leftarrow$ random disjoint partitions of $\mathcal{N}$ according to~\eqref{equ:gauss-seidel}}\;
		\For{$t=kb,kb+1,\cdots, (k+1)b-1$ }{
			{$\mathbf{d}^t\leftarrow \mathbf{0}_{n\times 1}$}\;
			\For{\textbf{all} $j\in \mathcal{B}^t $ \textbf{in parallel}\label{pcdn:dc-before}}
			{{obtain  $d^{t}_j=d(\mathbf{w}^{t};j)$ by solving~\eqref{equ d}}\; \label{pcdn:dc}
			}                                                            \label{pcdn:dc-after}
			{find  $\alpha^{t}=\alpha(\mathbf{w}^{t},\mathbf{d}^{t})$ by solving~\eqref{equ:qrmijo}}\;  
			\tcp{$P$-dimensional  line search (see Algorithm \ref{alg:armijo} for detail)}
			\label{pcdn:pdlinesearch}
			{$\mathbf{w}^{t+1}\leftarrow \mathbf{w}^{t}+\alpha^{t}\mathbf{d}^{t}$}\;
			\label{pcdn:updates}
		}
	}
\end{algorithm}
That is,
\begin{flalign}
&\notag \mathbf{d}(\mathbf{w};\mathcal{B}^t)  
{= }\arg \min_{\mathbf{d}} \{\nabla_{\mathcal{B}^t} L(\mathbf{w})\mathbf{d}+\frac{1}{2} \mathbf{d}^T \mathrm{diag}(\nabla_{\mathcal{B}^t}^2 L(\mathbf{w})) \mathbf{d}\\
&\notag +\|\mathbf{w}_{\mathcal{B}^t}+\mathbf{d}\|_1  \}\\
&\notag   {=} \sum_{j \in \mathcal{B}^t} \left\{ \arg \min_{d}\nabla_j L(\mathbf{w})d+\frac{1}{2}\nabla_{jj}^2L(\mathbf{w})d^2+|w_j+d| \right\}  \mathbf{e}_j\\ \label{eq:definition-dwj}
&      {=}  \sum_{j \in \mathcal{B}^t} {d}(\mathbf{w};j) \mathbf{e}_j,
\end{flalign}
\noindent where \eqref{eq:definition-dwj} is from the definition of ${d}(\mathbf{w};j)$ in \eqref{equ d}. 
In the $t$-th iteration, we first compute the one-dimensional descent directions $d^{t}_j$ (step~\ref{pcdn:dc}) for $P$ features in $\mathcal{B}^t$ in parallel, which constitutes the $P$-dimensional 
descent direction $\mathbf{d}^t$ ($d^t_j=0, \forall j \not\in \mathcal{B}^t$). 
We then use the $P$-dimensional Armijo line search (step \ref{pcdn:pdlinesearch}) 
to compute the step size $\alpha^t$ of the bundle along $\mathbf{d}^t$, and update  
the model for the features in $\mathcal{B}^t$ (step~\ref{pcdn:updates}).

The PCDN algorithm is different from the SCDN method in three aspects: 
(1) PCDN randomly partitions the feature set into bundles and performs parallelization for features of each bundle, while SCDN does not; 
(2) PCDN performs $P$-dimensional line search for a bundle of features while SCDN performs 1-dimensional line search for each feature; 
(3) PCDN is guaranteed to reach global convergence for high parallelism whereas SCDN is not.

The $P$-dimensional line search is the key procedure that guarantees
the convergence of PCDN.
With $P$-dimensional line
search, the objective function $\F(\mathbf{w})$ in  \eqref{equ:formal
	l1} is ensured to be non-increasing for any bundle $\mathcal{B}^t$
(See Lemma~\ref{lemma:hessian bound}(\ref{lemma:delta}) of Section
\ref{sec:Analysis of PCDN}).
In general, the $P$-dimensional line
search tends to have a large step size if the features in
$\mathcal{B}^t$ are less correlated, and a small step size otherwise.

The bundle size  $P$ controls the ratio between computation and data
communication.
From Algorithm  \ref{alg:pcdn}, in each outer iteration, it updates
$n$ features (computation) while conducts $\lceil\frac{n}{P}\rceil$ times high-dimensional line search (which requires
synchronization and communication).
The bundle size $P$ affects convergence  rate (See Theorem \ref{theorem:convergence rate}) 
as well, and the choice of $P$ is discussed at length in Section \ref{sec:datasets}.

The PCDN algorithm can better exploit parallelism than the SCDN
method.
In step~\ref{pcdn:dc} 
of Algorithm~\ref{alg:pcdn}, the descent direction
for $P$ features can be computed in parallel on $P$
threads.
We show in Section~\ref{sec:Analysis of PCDN} that the proposed
PCDN algorithm is guaranteed to reach global convergence,
for any $P\in [1,n]$.
Therefore, the bundle size $P$ which measures
the parallelism can be large when the number of features $n$ is
large.
In contrast, for SCDN, the number of parallel updates $\bar{P}$
is no more than ${n}/{\rho}+1$~\cite{DBLP:conf/icml/BradleyKBG11}.

\subsection{PCDN on Multicore}
\label{sec:implementation-pcdn}

We use the technique of retaining intermediate
quantities, in a way similar to the that in
\cite{DBLP:journals/jmlr/FanCHWL08}, 
by which two crucial implementation issues are addressed simultaneously.
First, due to limited
memory bandwidth, we lower data transfer by ensuring  that one core is only needed
to access data of one feature.
Second, we lower synchronization cost of
the $P$-dimensional line  search such that  the PCDN algorithm only requires one 
implicit barrier synchronization in each iteration.
In our implementation, the line search procedure
\eqref{equ:qrmijo} does not require direct function value
evaluation and thus avoids accessing all the training data on each
core. 
Namely, the core processing on the $j$-th feature only  needs
to access the data related to the $j$-th feature (i.e.,
the $j$-th column $\mathbf{x}^j$ of the design matrix $\mathbf{X}$).

Without loss of generality, let us take  logistic regression for instance. We retain intermediate quantities  $\mathbf{d}^\top\mathbf{x}_i$ and
$e^{\mathbf{w}^\top\mathbf{x}_i}$ ($i=1,\cdots,s$).
For the Armijo line search (summarized in Algorithm~\ref{alg:armijo}), 
we use the descent condition expressed by intermediate
quantities in the following equation,
\begin{equation}
\label{equ:decrease condition}
\begin{aligned}
&\F(\mathbf{w}+\beta^q\mathbf{d})-\F(\mathbf{w}) =\|\mathbf{w}\!+\!\beta^q\mathbf{d}\|_1\!\!-\|\mathbf{w}\|_1+\\
&c(\sum_{i=1}^{s}\log(\frac{e^{(\mathbf{w}+\beta^q\mathbf{d})^\top\mathbf{x}_i}+1}{e^{(\mathbf{w}
		+\beta^q\mathbf{d})^\top\mathbf{x}_i}+e^{\beta^q\mathbf{d}^\top\mathbf{x}_i}})+\beta^q\!\!\!\!\sum_{i:y_i=-1}\!\!\!\!\mathbf{d}^\top\mathbf{x}_i)\\
&\leq \sigma \beta^q(\nabla L(\mathbf{w})^\top\mathbf{d}+\gamma \mathbf{d}^\top\mathbf{H}\mathbf{d}+\|\mathbf{w}+\mathbf{d}\|_1-\|\mathbf{w}\|_1)
\end{aligned}
\end{equation}
\noindent
which is equivalent to the descent condition in
\eqref{equ:qrmijo}.
More specifically, in Algorithm~\ref{alg:pcdn}, the
core processing the $j$-th feature only needs to access $\mathbf{x}^j$
twice in the $t$-th iteration.

For the first time at step~\ref{pcdn:dc} of Algorithm \ref{alg:pcdn},  $\mathbf{x}^j$ is accessed and the
retained $e^{\mathbf{w}^\top\mathbf{x}_i}$ is used to compute the $j$-th
gradient and Hessian,
\begin{equation}\label{equ:gradient and hessian of lr}
\begin{split}
&\nabla_{j}L(\mathbf{w})=
c\sum_{i=1}^{s}(\tau(y_i\mathbf{w}^\top\mathbf{x}_i)-1)y_i x_{ij},\\
&\nabla^2_{jj}L(\mathbf{w})=
c\sum_{i=1}^{s}\tau(y_i\mathbf{w}^\top\mathbf{x}_i)(1-\tau(y_i\mathbf{w}^\top\mathbf{x}_i))x_{ij}^2,
\end{split}
\end{equation}
where $\tau(s)= \frac{1}{1+e^{-s}}$.
They are then used to compute $d(\mathbf{w};j)$ in \eqref{equ d}.
For the second time at step~\ref{pcdn:pdlinesearch} of Algorithm \ref{alg:pcdn},
$\mathbf{x}^j$ is
accessed and $\mathbf{d}$ is used to update
$\mathbf{d}^\top\mathbf{x}_i$, which is then used with
$e^{\mathbf{w}^\top\mathbf{x}_i}$
to check the descent
condition in \eqref{equ:decrease  condition}.

The proposed PCDN algorithm requires much less time for each outer
iteration than the CDN method, which is analyzed
in Section  \ref{sec:cost-pcdn-cdn} of the appendix. 

\begin{algorithm}[t]
	\caption{\small Efficient high dimensional line Search (logistic regression here for example)}\label{alg:armijo}
	{compute $\mathbf{d}^\top\mathbf{x}_i, i=1,\cdots,s$ \tcp*[r]{parallel}}
	\For{$q=0,1,2,\cdots$}{
		\eIf{\eqref{equ:decrease condition} \textup{is satisfied}}{
			{$\mathbf{w}\leftarrow\mathbf{w}+\beta^q\mathbf{d}$}\;
			{$e^{\mathbf{w}^\top\mathbf{x}_i}\leftarrow e^{\mathbf{w}^\top\mathbf{x}_i} e^{\beta^q\mathbf{d}^\top\mathbf{x}_i}$ \tcp*[r]{parallel}}
			{break}\;
		}
		{
			{$\Delta \leftarrow \beta\Delta$}\;
			{$\mathbf{d}^\top\mathbf{x}_i \leftarrow \beta\mathbf{d}^\top\mathbf{x}_i, i=1,\cdots,s$ \tcp*{parallel}}    
		}
	}
\end{algorithm}

\section{Convergence of PCDN}
\label{sec:Analysis of PCDN}

In this section, we analyze the convergence of the
proposed PCDN algorithm from three aspects: convergence of
$P$-dimensional line search, global convergence and convergence 
rate.
For presentation clarity, 
we first discuss the main results and 
present all the proofs in the appendix. 
Before analyzing the convergence of PCDN, we present the following lemma.
\begin{lemma}
	\label{lemma:hessian bound}
	Let $\{\mathbf{w}^t\}$, $\{\mathbf{d}^t\}$, $\{\alpha^t\}$ as well as
	$\{\mathcal{B}^t\}$ be sequences generated by
	Algorithm~\ref{alg:pcdn}, $\bar{\lambda}(\mathcal{B}^t)$ be the
	maximum element of $(\mathbf{X}^\top\mathbf{X})_{jj}$ where $j\in
	\mathcal{B}^t$, and $\lambda_k$ be the $k$-th minimum element of
	$(\mathbf{X}^\top\mathbf{X})_{jj}$ where $j \in \mathcal{N}$.
	The following results hold.
	\begin{enumerate}[(a)]
		\item \label{lemma:ebt}
		$\mathbf{E}_{\mathcal{B}^t}[\bar{\lambda}(\mathcal{B}^t)]$ is
		monotonically increasing with respect to $P$;
		$\mathbf{E}_{\mathcal{B}^t}[\bar{\lambda}(\mathcal{B}^t)]$ is
		constant with respect to $P$ if $\lambda_i$ is constant 
		(i.e., $\lambda_1=\cdots=\lambda_n$);  
		${\mathbf{E}_{\mathcal{B}^t}[\bar{\lambda}(\mathcal{B}^t)]}/{P}$ is
		monotonically decreasing with respect to $P$.
		\item \label{lemma:hessian} For $L_1$-regularized logistic regression
		in~\eqref{equ:lrloss} and $L_1$-regularized $L_2$-loss SVMs in~\eqref{equ:svmloss}, 
		the diagonal elements of the (generalized) Hessian of the
		loss function $L(\mathbf{w})$ have positive lower bound
		$\underline{h}$ and upper bound $\bar{h}$, and the upper bound only
		depends on the design matrix $\mathbf{X}$. That is, $\forall\; j\in
		\mathcal{N}$,
		\begin{equation}\label{equ:8}
		\nabla_{jj}^2L(\mathbf{w}) \leq \theta
		c(\mathbf{X}^\top\mathbf{X})_{jj} = \theta c \sum_{i=1}^s x_{ij}^2,
		\end{equation}
		\begin{equation}\label{equ:9}
		0< \underline{h} \leq \nabla_{jj}^2L(\mathbf{w}) \leq \bar{h} = \theta
		c \bar{\lambda}(\mathcal{N}),
		\end{equation}
		where $\theta=\frac{1}{4}$ for logistic regression and $\theta=2$ for
		$L_2$-loss SVM.
		\item \label{lemma:delta} The objective $\{\F(\mathbf{w}^t)\}$ is
		non-increasing and $\Delta^t$ \eqref{equ:delta definition} in the
		Armijo line search rule  satisfies
		\begin{equation}
		\label{equ:upper-delta}
		\Delta^t \leq (\gamma-1)\mathbf{d}^{t^\top} \mathbf{H}^t\mathbf{d}^t,
		\end{equation}
		\vspace{-4mm}
		\begin{equation}
		\label{equ:fcdescent}
		\F(\mathbf{w}^t+\alpha^t \mathbf{d}^t)- \F(\mathbf{w}^t)\leq
		\sigma \alpha^t \Delta^t \leq 0.
		\end{equation}
	\end{enumerate}
\end{lemma}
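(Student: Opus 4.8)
The plan is to prove the three results with different tools and, importantly, not in the listed order: the monotone‑descent result (the third item) should be established first, because the strictly positive lower bound $\underline{h}$ in the Hessian‑bound result (Eqs.~\ref{equ:8}--\ref{equ:9}) relies on all iterates remaining in a bounded region, which is exactly what the descent result provides. The descent result itself needs only the defining subproblem (Eq.~\ref{equ:p-d}) and the Armijo rule (Eq.~\ref{equ:qrmijo}), so reversing the order removes any circularity. The expectation result (the first item) is independent of the other two and is purely combinatorial.

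For the descent result, the crux is the sharp estimate (Eq.~\ref{equ:upper-delta}). I would introduce the one‑dimensional restriction $\phi(\xi) \triangleq \nabla L(\mathbf{w}^t)^T(\xi\mathbf{d}^t) + \tfrac{1}{2}\xi^2\,\mathbf{d}^{t^T}\mathbf{H}^t\mathbf{d}^t + \|\mathbf{w}^t+\xi\mathbf{d}^t\|_1 - \|\mathbf{w}^t\|_1$. Since $\mathbf{d}^t$ minimizes the subproblem of Eq.~\ref{equ:p-d} over the whole bundle, $\xi=1$ is the global minimizer of $\phi$ along the ray, and $\phi$ is strongly convex with modulus $m \triangleq \mathbf{d}^{t^T}\mathbf{H}^t\mathbf{d}^t$ (the quadratic term supplies the curvature while the linear and $\ell_1$ terms are convex). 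The strong‑convexity inequality between $\xi=0$ and the minimizer $\xi=1$ gives $\phi(0) \ge \phi(1) + \tfrac{m}{2}$; with $\phi(0)=0$ this upgrades the trivial $\phi(1)\le 0$ to $\phi(1) \le -\tfrac{1}{2}m$. Because $\Delta^t = \phi(1) + (\gamma-\tfrac{1}{2})m$ by Eq.~\ref{equ:delta denifition}, substitution yields $\Delta^t \le (\gamma-1)m$, which is Eq.~\ref{equ:upper-delta}; as $\gamma<1$ and $\mathbf{H}^t$ is positive definite, $\Delta^t \le 0$. Equation~\ref{equ:fcdescent} then follows immediately, since the Armijo rule selects $\alpha^t$ with $F_c(\mathbf{w}^t+\alpha^t\mathbf{d}^t)-F_c(\mathbf{w}^t)\le \sigma\alpha^t\Delta^t$ and the right‑hand side is $\le 0$ because $\alpha^t>0$; hence $\{F_c(\mathbf{w}^t)\}$ is nonincreasing.

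For the Hessian bounds, the upper estimates are pointwise one‑liners: for logistic regression $\nabla^2_{jj}L(\mathbf{w}) = c\sum_i \tau(\cdot)(1-\tau(\cdot))x_{ij}^2$ with $\tau(1-\tau)\le\tfrac{1}{4}$, giving $\theta=\tfrac{1}{4}$; for $\ell_2$‑loss SVM the per‑sample second derivative is at most $2$ on the active set, giving $\theta=2$, and in both cases taking the largest diagonal entry gives $\bar{h}=\theta c\,\bar{\lambda}(\mathcal{N})$. The delicate part is $\underline{h}>0$: since the logistic curvature $\tau(1-\tau)\to 0$ as $|\mathbf{w}^T\mathbf{x}_i|\to\infty$, I would invoke the descent result just proved to confine the iterates to the level set $\{\mathbf{w}:F_c(\mathbf{w})\le F_c(\mathbf{0})\}$, which is compact because $L\ge 0$ forces $\|\mathbf{w}\|_1\le F_c(\mathbf{w})$ and hence coercivity; on this compact set $\mathbf{w}^T\mathbf{x}_i$ is bounded, so the curvature is bounded away from zero and a uniform $\underline{h}>0$ exists. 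For $\ell_2$‑loss SVM the small additive $\nu$ noted in the footnote supplies the lower bound directly.

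For the expectation result, I would write the expected maximum through order statistics: sorting the diagonal entries as $\lambda_1\le\cdots\le\lambda_n$, a uniform bundle of size $P$ has maximum $\lambda_k$ exactly when index $k$ is chosen and all higher‑ranked indices are excluded, so $\mathbf{E}_{\mathcal{B}^t}[\bar{\lambda}(\mathcal{B}^t)] = \binom{n}{P}^{-1}\sum_{k=P}^{n}\lambda_k\binom{k-1}{P-1}$. Monotone increase in $P$ is cleanest by coupling: a size‑$(P+1)$ bundle is a size‑$P$ bundle plus one extra index, so its maximum never decreases; the constant case is immediate when all $\lambda_i$ coincide. The ratio claim ${\mathbf{E}_{\mathcal{B}^t}[\bar{\lambda}(\mathcal{B}^t)]}/{P}$ being decreasing rearranges to $P\,\mathbf{E}[\bar{\lambda}(P+1)]\le (P+1)\,\mathbf{E}[\bar{\lambda}(P)]$, which I would verify by a direct manipulation of the order‑statistic formula above. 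I expect the two genuinely nonroutine points to be the strong‑convexity step that produces the constant $(\gamma-1)$ rather than the weaker $(\gamma-\tfrac{1}{2})$, and the non‑uniform vanishing of the logistic curvature, which is what forces the compactness detour and dictates proving the descent result before the Hessian lower bound.
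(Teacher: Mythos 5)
Your proposal is correct, and in two places it takes a genuinely different (and arguably tighter) route than the paper. For part (\ref{lemma:delta}), the paper follows Tseng--Yun: it compares the subproblem objective at $\mathbf{d}$ with that at $\alpha\mathbf{d}$, divides by $1-\alpha$, and passes to the limit (as $\alpha \to 1$; the paper's text says ``approaching 0'', a typo) to obtain $\nabla L(\mathbf{w})^T\mathbf{d}+\|\mathbf{w}+\mathbf{d}\|_1-\|\mathbf{w}\|_1 \leq -\mathbf{d}^T\mathbf{H}\mathbf{d}$, which is exactly your $\phi(1)\leq -\tfrac{m}{2}$; your strong-convexity-at-the-minimizer argument delivers the same inequality in one step without the limiting device, and your bookkeeping $\Delta^t=\phi(1)+(\gamma-\tfrac12)m$ is right. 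For part (\ref{lemma:hessian}), your reordering is a genuine improvement in rigor: the paper justifies $\underline{h}>0$ for logistic regression only by asserting ``in practice $|y_i\mathbf{w}^T\mathbf{x}_i|<\infty$'', whereas your level-set argument ($L\geq 0$ gives $\|\mathbf{w}\|_1\leq F_c(\mathbf{w})\leq F_c(\mathbf{0})$ along iterates by the descent property, hence compactness and curvature bounded away from zero) actually proves the claim for the iterates, which is all the later theorems use; note this requires only pointwise nonnegativity of $\mathbf{H}^t$ in part (\ref{lemma:delta}), not the uniform $\underline{h}$, so there is indeed no circularity in your ordering (and both you and the paper tacitly assume no feature column is identically zero). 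For part (\ref{lemma:ebt}), your coupling proof of monotonicity (a uniform $(P{+}1)$-bundle is a uniform $P$-bundle plus a uniform extra index, so the maximum cannot decrease) is cleaner than the paper's binomial-coefficient manipulation of $f(P)=\binom{n}{P}^{-1}\sum_{k=P}^{n}\lambda_k\binom{k-1}{P-1}$. The one step you leave unexecuted is the ratio claim, where you fall back on ``direct manipulation'' of the same order-statistic formula the paper uses; that route does work (it is the paper's proof), but note your coupling also finishes it in one line: for a $(P{+}1)$-set $S$, removing any element other than the argmax preserves the maximum, so $\sum_{x\in S}\max(S\setminus\{x\})\geq P\max(S)$, and taking expectations yields $P\,\mathbf{E}[\bar{\lambda}(P{+}1)]\leq (P{+}1)\,\mathbf{E}[\bar{\lambda}(P)]$.
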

\noindent
We note that Lemma~\ref{lemma:hessian bound}(\ref{lemma:ebt}) is used
to analyze the iteration number $T_{\epsilon}$ given the expected
accuracy $\epsilon$, 
Lemma~\ref{lemma:hessian bound}(\ref{lemma:hessian}) is used to prove
Theorem~\ref{theorem:line search} and \ref{theorem:convergence rate}. 
Lemma~\ref{lemma:hessian bound}(\ref{lemma:delta}) ensures the descent
of the objective theoretically and gives an upper bound for $\Delta^t$
in the Armijo line search, and is used to prove
Theorem~\ref{theorem:line search} and~\ref{theorem:convergence
	rate}.
Note that the upper bound $(\gamma-1)(\mathbf{d}^{t})^\top
\mathbf{H}^t\mathbf{d}^t$ is only related to the second order
measurement. 

\begin{theorem}[\textbf{Convergence of $P$-dimensional line search}]
	\label{theorem:line search}
	Let $\{\mathcal{B}^t\}$ be a sequence generated by
	Algorithm~\ref{alg:pcdn}, and $\bar{\lambda}(\mathcal{B}^t) =
	\max\{(\mathbf{X}^\top\mathbf{X})_{jj}\ |\ j\in \mathcal{B}^t \}$.
	The P-dimensional line search converges in finite steps, and the
	expected line search step number in each iteration is bounded by
	\begin{equation} \label{equ:expected line steps}
	\begin{split}
	\mathbf{E}[q^t] \leq & 1+ \log_{\beta^{-1}}\frac{\theta
		c}{2\underline{h}(1-\sigma+\sigma\gamma)} \\
	& + \frac{1}{2}\log_{\beta^{-1}}P +
	\log_{\beta^{-1}}\mathbf{E}[\bar{\lambda}(\mathcal{B}^t)],
	\end{split}
	\end{equation}
	where the expectation is with respect to the random choice of
	$\mathcal{B}^t$;
	$q^t$ is the line search step number in the $t$-th iteration;
	$\beta \in (0,1)$, $\sigma\in(0,1)$ and $\gamma \in [0,1)$ are
	parameters of the Armijo rule \eqref{equ:qrmijo}; $\theta$ and
	$\underline{h}$
	is in Lemma~\ref{lemma:hessian
		bound}(\ref{lemma:hessian}).
\end{theorem}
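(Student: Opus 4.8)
The plan is to reduce the claim to a single deterministic step-size threshold: I will exhibit an explicit $\tau^\star>0$ such that the Armijo condition in Eq.~\ref{equ:qrmijo} is guaranteed to hold for every trial step $\beta^q\le\tau^\star$, and then read off $q^t$ as essentially $\log_{\beta^{-1}}(1/\tau^\star)$. Finiteness of the line search is then immediate because $\tau^\star>0$, and the expected bound follows by taking expectations over the random bundle $\mathcal{B}^t$.

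To build the threshold I combine a quadratic upper model of the smooth part with convexity of the regularizer. For $\alpha\in[0,1]$ a second-order Taylor expansion gives $L(\mathbf{w}^t+\alpha\mathbf{d}^t)-L(\mathbf{w}^t)\le \alpha\nabla L(\mathbf{w}^t)^T\mathbf{d}^t+\tfrac{\alpha^2}{2}\,\mathbf{d}^{t^T}\nabla^2 L(\xi)\mathbf{d}^t$ for some $\xi$ on the segment, while convexity of $\|\cdot\|_1$ yields $\|\mathbf{w}^t+\alpha\mathbf{d}^t\|_1-\|\mathbf{w}^t\|_1\le\alpha(\|\mathbf{w}^t+\mathbf{d}^t\|_1-\|\mathbf{w}^t\|_1)$. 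Writing $A\triangleq\nabla L(\mathbf{w}^t)^T\mathbf{d}^t+\|\mathbf{w}^t+\mathbf{d}^t\|_1-\|\mathbf{w}^t\|_1$ so that $\Delta^t=A+\gamma\,\mathbf{d}^{t^T}\mathbf{H}^t\mathbf{d}^t$ (cf. Eq.~\ref{equ:delta denifition}), the Armijo inequality $F_c(\mathbf{w}^t+\alpha\mathbf{d}^t)-F_c(\mathbf{w}^t)\le\sigma\alpha\Delta^t$ is implied, after dividing by $\alpha$, by $\tfrac{\alpha}{2}\mathbf{d}^{t^T}\nabla^2 L(\xi)\mathbf{d}^t\le(\sigma-1)A+\sigma\gamma\,\mathbf{d}^{t^T}\mathbf{H}^t\mathbf{d}^t$.

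The two one-sided bounds from Lemma~\ref{lemma:hessian bound} close the estimate. Lemma~\ref{lemma:hessian bound}(\ref{lemma:delta}) gives $\Delta^t\le(\gamma-1)\mathbf{d}^{t^T}\mathbf{H}^t\mathbf{d}^t$, i.e. $A\le-\mathbf{d}^{t^T}\mathbf{H}^t\mathbf{d}^t$, so $(\sigma-1)A\ge(1-\sigma)\mathbf{d}^{t^T}\mathbf{H}^t\mathbf{d}^t$ and the right-hand side is at least $(1-\sigma+\sigma\gamma)\,\mathbf{d}^{t^T}\mathbf{H}^t\mathbf{d}^t\ge(1-\sigma+\sigma\gamma)\,\underline{h}\|\mathbf{d}^t\|^2$, using the lower bound in Lemma~\ref{lemma:hessian bound}(\ref{lemma:hessian}). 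For the left-hand side I use the matrix inequality $\nabla^2 L\preceq\theta c\,\mathbf{X}^T\mathbf{X}$ underlying Lemma~\ref{lemma:hessian bound}(\ref{lemma:hessian}) together with the fact that $\mathbf{d}^t$ is supported on $\mathcal{B}^t$, which reduces $\mathbf{d}^{t^T}\nabla^2 L(\xi)\mathbf{d}^t$ to $\theta c$ times a quadratic form in the $P\times P$ Gram submatrix $(\mathbf{X}^T\mathbf{X})_{\mathcal{B}^t}$; bounding that submatrix's largest eigenvalue gives $\mathbf{d}^{t^T}\nabla^2 L(\xi)\mathbf{d}^t\le\theta c\,\sqrt{P}\,\bar\lambda(\mathcal{B}^t)\|\mathbf{d}^t\|^2$, so the Armijo condition holds whenever $\alpha\le\tau^\star\triangleq\dfrac{2\underline{h}(1-\sigma+\sigma\gamma)}{\theta c\,\sqrt{P}\,\bar\lambda(\mathcal{B}^t)}$.

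Finally, the smallest $q$ with $\beta^q\le\tau^\star$ is at most $1+\log_{\beta^{-1}}(1/\tau^\star)$ (the $+1$ absorbing the ceiling), which after expanding the logarithm yields the per-bundle bound $q^t\le 1+\log_{\beta^{-1}}\frac{\theta c}{2\underline{h}(1-\sigma+\sigma\gamma)}+\tfrac12\log_{\beta^{-1}}P+\log_{\beta^{-1}}\bar\lambda(\mathcal{B}^t)$; taking $\mathbf{E}_{\mathcal{B}^t}$ and using concavity of $\log_{\beta^{-1}}$ (Jensen) to pass from $\mathbf{E}[\log_{\beta^{-1}}\bar\lambda(\mathcal{B}^t)]$ to $\log_{\beta^{-1}}\mathbf{E}[\bar\lambda(\mathcal{B}^t)]$ produces the stated inequality. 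I expect the main obstacle to be the spectral step: controlling $\mathbf{d}^{t^T}\nabla^2 L\,\mathbf{d}^t$ by the correct power of $P$. The crude estimate $\lambda_{\max}((\mathbf{X}^T\mathbf{X})_{\mathcal{B}^t})\le\mathrm{tr}\,(\mathbf{X}^T\mathbf{X})_{\mathcal{B}^t}\le P\bar\lambda(\mathcal{B}^t)$ only delivers a factor $P$ (hence $\log_{\beta^{-1}}P$); sharpening this to the $\sqrt{P}$ claimed in the theorem requires a Frobenius-norm estimate $\lambda_{\max}\le\|(\mathbf{X}^T\mathbf{X})_{\mathcal{B}^t}\|_F$ and careful handling of the off-diagonal Gram entries over the random bundle, which is the delicate part of the argument.
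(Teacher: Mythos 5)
Your scaffolding coincides with the paper's proof of this theorem: both arguments exhibit a deterministic threshold $\tau^\star=\frac{2\underline{h}(1-\sigma+\sigma\gamma)}{\theta c\sqrt{P}\bar{\lambda}(\mathcal{B}^t)}$ below which the Armijo condition (Eq.~\ref{equ:qrmijo}) is guaranteed, using Lemma~\ref{lemma:hessian bound}(\ref{lemma:delta}) to get $(\sigma-1)A\geq(1-\sigma)\mathbf{d}^{t^T}\mathbf{H}^t\mathbf{d}^t$ and Lemma~\ref{lemma:hessian bound}(\ref{lemma:hessian}) for $\mathbf{d}^{t^T}\mathbf{H}^t\mathbf{d}^t\geq\underline{h}\|\mathbf{d}^t\|^2$, then count $q^t\leq 1+\log_{\beta^{-1}}(1/\tau^\star)$ and pass the expectation inside the logarithm by Jensen. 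All of that is sound. The genuine gap is exactly the step you flag at the end: the spectral estimate $\mathbf{d}^{t^T}\nabla^2L(\xi)\,\mathbf{d}^t\leq\theta c\sqrt{P}\,\bar{\lambda}(\mathcal{B}^t)\|\mathbf{d}^t\|^2$ is not merely delicate, it is \emph{false} in general, and the Frobenius-norm repair you sketch cannot deliver it. Take the $P$ columns of $\mathbf{X}$ indexed by $\mathcal{B}^t$ all equal to a single unit vector: then $(\mathbf{X}^T\mathbf{X})_{\mathcal{B}^t}$ is the all-ones matrix, $\bar{\lambda}(\mathcal{B}^t)=1$, yet $\lambda_{\max}=P$ (attained by the uniform direction), and $\|(\mathbf{X}^T\mathbf{X})_{\mathcal{B}^t}\|_F=P$ as well. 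In general, positive semidefiniteness only gives $|M_{jk}|\leq\sqrt{M_{jj}M_{kk}}\leq\bar{\lambda}(\mathcal{B}^t)$, hence $\lambda_{\max}\leq\|M\|_F\leq P\bar{\lambda}(\mathcal{B}^t)$, which is tight in this example. So through the full quadratic form the best uniform constant is $P$, and your route proves the theorem only with $\log_{\beta^{-1}}P$ in place of $\frac{1}{2}\log_{\beta^{-1}}P$ --- correct, but strictly weaker than the stated bound.

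It is worth seeing how the paper reaches $\sqrt{P}$: it never forms the quadratic form. It writes the loss increment as an integral of the gradient, applies Cauchy--Schwarz between the bundle-restricted gradient difference and $\|\mathbf{d}\|$, and then bounds each coordinate by $|\nabla_jL(\mathbf{w}+u\alpha\mathbf{d})-\nabla_jL(\mathbf{w})|\leq u\alpha\,\nabla^2_{jj}L(\bar{\mathbf{w}})\|\mathbf{d}\|$, so that summing squares over $j\in\mathcal{B}^t$ yields the factor $\theta c\sqrt{P}\,\bar{\lambda}(\mathcal{B}^t)$. Notice that this per-coordinate mean-value step keeps only the diagonal Hessian entries and silently discards the mixed partials $\nabla^2_{jk}L$, $k\neq j$ --- precisely the off-diagonal Gram mass that, in the duplicated-column example above, is as large as what is retained. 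Your full-Hessian accounting is the honest version of the argument, and it reveals that obtaining the exponent $\frac{1}{2}$ on $P$ requires either justifying the paper's diagonal-only inequality or imposing an incoherence condition on the columns within a bundle. Absent such an assumption, you should state the result with $\log_{\beta^{-1}}P$; this still gives finite termination of the line search and preserves the qualitative use of the theorem (an upper bound on $\mathbf{E}[q^t]$ that grows only logarithmically with the parallelism $P$), at the cost of a factor of $2$ in the $P$-dependent term.
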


As $\mathbf{E}[\bar{\lambda}(\mathcal{B}^t)]$ is monotonically
increasing with respect to $P$ (Lemma~\ref{lemma:hessian
	bound}(\ref{lemma:ebt})),
Theorem~\ref{theorem:line search} dictates that
the upper bound of $\mathbf{E}[q^t]$ (the expected line search step number in each iteration)
increases with the bundle size $P$.
Since more line search steps lead to smaller step size ($\alpha$), 
Theorem~\ref{theorem:line search} is consistent with the 
intuition that smaller step size is used when 
features inside a bundle are more correlated.

{\flushleft \textbf{Global convergence of PCDN.}} 
In Section 
\ref{proof-global-convergence} of the appendix, we
prove the global convergence of PCDN by 
connecting it to the   general framework in  \cite{DBLP:journals/mp/TsengY09}. 
By proving that all assumptions are satisfied, we show that, assuming
that  $\{\mathbf{w}^t\}$ is the sequence  generated 
by Algorithm~\ref{alg:pcdn}, then any limit point of $\{\mathbf{w}^t\}$
is an optimum. 
This analysis guarantees that the PCDN 
algorithm converges globally for any bundle size $P \in [1,n]$
(i.e., without regard to the level of parallelism).

\begin{theorem}[\textbf{Convergence rate of PCDN}]
	\label{theorem:convergence rate}
	Assume $\mathbf{w}^*$ minimize~\eqref{equ:formal l1};
	$\{\mathbf{w}^t\}$ and $\{\mathcal{B}^t\}$  be
	sequences generated by Algorithm~\ref{alg:pcdn};
	$\bar{\lambda}(\mathcal{B}^t) := \max\{(\mathbf{X}^\top\mathbf{X})_{jj}\
	|\ j\in \mathcal{B}^t \}$ and $\mathbf{w}^T$ be the output of
	Algorithm~\ref{alg:pcdn} after $T+1$ iterations. Then,
	\begin{equation} \notag
	\begin{split}
	&\mathbf{E}[\F(\mathbf{w}^T]-\F(\mathbf{w}^*) \leq \\
	&\frac{n\mathbf{E}[\bar{\lambda}(\mathcal{B}^t)]}{
		P(T+1)} \cdot  \frac{ \theta c}{2\xi}\left[  \|\mathbf{w}^{*}\|^2
	+\frac{\F(\mathbf{0})}{\sigma
		(1-\gamma)\underline{h}} \right],
	\end{split}
	\end{equation}
	where the expectation is computed with respect to random choice of
	$\mathcal{B}^t$; $\sigma\in(0,1)$ and $\gamma \in [0,1)$ are
	parameters in the Armijo rule \eqref{equ:qrmijo}.
	In addition, $\theta$ as well as $\underline{h}$ (positive lower
	bound of  $\nabla_{jj}^2L(\mathbf{w})$) are given in
	Lemma~\ref{lemma:hessian bound}(\ref{lemma:hessian}), and
	$\mathbf{E}[\bar{\lambda}(\mathcal{B}^t)]$ is determined by the bundle
	size $P$ and design matrix $X$; $\xi$ is a positive constant.
\end{theorem}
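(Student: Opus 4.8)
The plan is to derive a per-inner-iteration inequality that, after taking expectation over the random bundle $\mathcal{B}^t$ and summing over $t=0,\dots,T$, telescopes into the stated $O(1/(T+1))$ bound. The two bracketed terms will arise from two separate telescoping arguments: the $\tfrac{\theta c}{2}\|\mathbf{w}^*\|^2$ term from a distance-to-optimum telescope that collapses to $\|\mathbf{w}^0-\mathbf{w}^*\|^2=\|\mathbf{w}^*\|^2$ (recall $\mathbf{w}^0=\mathbf{0}$), and the $F_c(\mathbf{0})$ term from telescoping the monotone objective decrease guaranteed by Lemma~\ref{lemma:hessian bound}(\ref{lemma:delta}).

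First I would exploit that $\mathbf{d}^t$ exactly minimizes the separable quadratic-plus-$\ell_1$ model of Eq.~\ref{equ:p-d}, which I denote $m^t(\cdot)$. Comparing the optimal value $m^t(\mathbf{d}^t)$ against the feasible point $\eta(\mathbf{w}^*_{\mathcal{B}^t}-\mathbf{w}^t_{\mathcal{B}^t})$ for a free parameter $\eta\in[0,1]$, and using convexity of $\|\cdot\|_1$ to linearize $\|(1-\eta)\mathbf{w}^t_{\mathcal{B}^t}+\eta\mathbf{w}^*_{\mathcal{B}^t}\|_1$, yields an upper bound of the form $\eta[\nabla_{\mathcal{B}^t}L^T\mathbf{u}+\|\mathbf{w}^*_{\mathcal{B}^t}\|_1-\|\mathbf{w}^t_{\mathcal{B}^t}\|_1]+\tfrac{\eta^2}{2}\mathbf{u}^T\mathbf{H}^t_{\mathcal{B}^t}\mathbf{u}$ with $\mathbf{u}=\mathbf{w}^*_{\mathcal{B}^t}-\mathbf{w}^t_{\mathcal{B}^t}$. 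I would then control the quadratic part by $\tfrac{\eta^2}{2}\theta c\,\bar{\lambda}(\mathcal{B}^t)\|\mathbf{u}\|^2$ using the diagonal-Hessian upper bound $\nabla^2_{jj}L\le\theta c(\mathbf{X}^T\mathbf{X})_{jj}\le\theta c\,\bar{\lambda}(\mathcal{B}^t)$ from Lemma~\ref{lemma:hessian bound}(\ref{lemma:hessian}). The true objective decrease is linked to this model through Lemma~\ref{lemma:hessian bound}(\ref{lemma:delta}), namely $F_c(\mathbf{w}^{t+1})-F_c(\mathbf{w}^t)\le\sigma\alpha^t\Delta^t$ with $\Delta^t\le(\gamma-1)\mathbf{d}^{t^T}\mathbf{H}^t\mathbf{d}^t$, which I would combine with the model optimality to pass from $\mathbf{d}^t$ to the comparison direction.

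Next I would take expectation over the uniformly random bundle. Since each coordinate lies in $\mathcal{B}^t$ with probability $P/n$, the identity $\mathbf{E}_{\mathcal{B}^t}[\sum_{j\in\mathcal{B}^t}v_j]=\tfrac{P}{n}\sum_{j\in\mathcal{N}}v_j$ turns the bundle-restricted linear term into the full inner product $\nabla L^T(\mathbf{w}^*-\mathbf{w}^t)$ and the bundle $\ell_1$ differences into $\|\mathbf{w}^*\|_1-\|\mathbf{w}^t\|_1$, each scaled by $P/n$; this is the source of the $n/P$ prefactor, while the quadratic part contributes the $\mathbf{E}[\bar{\lambda}(\mathcal{B}^t)]$ factor. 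Convexity of $L$ then gives $\nabla L^T(\mathbf{w}^*-\mathbf{w}^t)+\|\mathbf{w}^*\|_1-\|\mathbf{w}^t\|_1\le F_c(\mathbf{w}^*)-F_c(\mathbf{w}^t)$, so the linear part becomes the optimality gap and the quadratic part a distance term. Bounding $\alpha^t$ below by $\inf_t\alpha^t$ (and above by $\sup_t\alpha^t$ where needed) and choosing $\eta$ suitably produces a recursion whose error splits into a difference $\|\mathbf{w}^t-\mathbf{w}^*\|^2-\|\mathbf{w}^{t+1}-\mathbf{w}^*\|^2$ and a multiple of the per-step objective decrease.

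Finally I would sum over $t=0,\dots,T$. The distance differences telescope to $\|\mathbf{w}^0-\mathbf{w}^*\|^2=\|\mathbf{w}^*\|^2$, giving the first bracketed term; the objective decreases telescope to $F_c(\mathbf{w}^0)-F_c(\mathbf{w}^{T+1})\le F_c(\mathbf{0})$, and after converting $\mathbf{d}^{t^T}\mathbf{H}^t\mathbf{d}^t\ge\underline{h}\|\mathbf{d}^t\|^2$ via the lower bound in Lemma~\ref{lemma:hessian bound}(\ref{lemma:hessian}), this yields the second bracketed term with its $\sigma(1-\gamma)\underline{h}$ denominator. Using monotonicity of $\{F_c(\mathbf{w}^t)\}$ to replace the averaged gap by $\mathbf{E}[F_c(\mathbf{w}^T)]-F_c(\mathbf{w}^*)$ and dividing by $T+1$ finishes the proof. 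The main obstacle I anticipate is the second step: cleanly relating the Armijo quantity $\sigma\alpha^t\Delta^t$, whose $\Delta^t$ carries the $\gamma$-weighted term $\gamma\mathbf{d}^{t^T}\mathbf{H}^t\mathbf{d}^t$ rather than the model's coefficient $\tfrac12$, to the minimized value $m^t(\mathbf{d}^t)$, and doing so while reconciling the line-search step size $\alpha^t$ with the free parameter $\eta$. A secondary subtlety is justifying the expectation identity under the Gauss--Seidel dependence, since $\mathbf{w}^t$ is itself a function of the bundles processed earlier in the same outer sweep.
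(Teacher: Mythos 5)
Your high-level architecture (per-iteration inequality, expectation over $\mathcal{B}^t$, two telescopes collapsing to $\|\mathbf{w}^*\|^2$ and $F_c(\mathbf{0})$) matches the shape of the bound, but the central mechanism you propose does not go through, and the failure is precisely at the two places you flag. First, the Armijo-to-model bridge fails directionally: Lemma~\ref{lemma:hessian bound}(\ref{lemma:delta}) gives a true decrease of at least $\sigma\alpha^t(1-\gamma)\mathbf{d}^{t^T}\mathbf{H}^t\mathbf{d}^t$, and optimality of $\mathbf{d}^t$ gives a model gap $m^t(\mathbf{0})-m^t(\mathbf{d}^t)\geq\frac12\mathbf{d}^{t^T}\mathbf{H}^t\mathbf{d}^t$ --- both inequalities bound things \emph{below} by $\mathbf{d}^{t^T}\mathbf{H}^t\mathbf{d}^t$, whereas your plan needs the reverse control, and the subgradient identity $m^t(\mathbf{d}^t)-m^t(\mathbf{0})=-\frac12\mathbf{d}^{t^T}\mathbf{H}^t\mathbf{d}^t+\xi^T\mathbf{w}^t-\|\mathbf{w}^t\|_1$ (with $\xi\in\partial\|\mathbf{w}^t+\mathbf{d}^t\|_1$) shows the model gap is not bounded by any constant multiple of $\mathbf{d}^{t^T}\mathbf{H}^t\mathbf{d}^t$. (Writing $\Delta^t\leq m^t(\eta\mathbf{u})-m^t(\mathbf{0})+(\gamma-\frac12)\mathbf{d}^{t^T}\mathbf{H}^t\mathbf{d}^t$ partially repairs this, but only for $\gamma\leq\frac12$ and it leads to point two.) Second, your distance telescope never materializes: the quadratic produced by the comparison point is $\frac{\eta^2}{2}\theta c\,\bar{\lambda}(\mathcal{B}^t)\|\mathbf{w}^*_{\mathcal{B}^t}-\mathbf{w}^t_{\mathcal{B}^t}\|^2$, a standalone distance at time $t$, while the iterate actually moves to $\mathbf{w}^t+\alpha^t\mathbf{d}^t$; nothing in the comparison argument controls the cross term $(\mathbf{w}^t-\mathbf{w}^*)^T\mathbf{d}^t$, so nothing converts this into $\|\mathbf{w}^t-\mathbf{w}^*\|^2-\|\mathbf{w}^{t+1}-\mathbf{w}^*\|^2$. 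Summing would leave $\sum_t\|\mathbf{w}^t-\mathbf{w}^*\|^2$, forcing an iterate-boundedness argument and a vanishing $\eta_t$ schedule --- a different and weaker conclusion than the stated theorem.

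The paper avoids both problems with a device absent from your plan: following Shalev-Shwartz and Tewari it first duplicates features, rewriting the problem over $\hat{\mathbf{w}}\in\mathbb{R}^{2n}_{+}$ with the \emph{linear} regularizer $\sum_j\hat{w}_j$, so that (i) the coordinate direction acquires the closed form $d_j=-(\nabla_jL(\mathbf{w})+1)/\nabla^2_{jj}L(\mathbf{w})$ and (ii) convexity of the now-smooth objective yields $(\mathbf{w}-\mathbf{w}^*)^T(\nabla L(\mathbf{w})+\mathbf{1})\geq F_c(\mathbf{w})-F_c(\mathbf{w}^*)$, a step that is simply unavailable with the raw $\ell_1$ term. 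It then expands the potential $\Psi(\mathbf{w})=a\|\mathbf{w}-\mathbf{w}^*\|^2+bF_c(\mathbf{w})$ along the \emph{actual} step $\alpha^t\mathbf{d}^t$: the cross term $2a\alpha(w^*_j-w_j)d_j$ becomes, via the closed form and Lemma~\ref{lemma:hessian bound}(\ref{lemma:hessian}), the quantity $\alpha(w_j-w^*_j)(\nabla_jL(\mathbf{w})+1)$, while the $d_j^2$ expansion term is absorbed by the Armijo decrease because $b$ is tuned so that $b\sigma(1-\gamma)\nabla^2_{jj}L(\mathbf{w})-a\alpha\geq\frac{\theta c\bar{\lambda}(\mathcal{B}^t)}{2}(\sup_t\alpha^t-\alpha)\geq 0$. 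That is where the $\mathbf{d}^{t^T}\mathbf{H}^t\mathbf{d}^t$ guarantee is actually spent --- cancelling the quadratic in the distance expansion, not bounding a model gap --- and it is what produces simultaneously your two telescopes from a single potential. To salvage your route you would essentially have to import this reformulation and abandon the comparison point $\eta\mathbf{u}$ in favor of the true update inside a combined potential. Your final worry about the Gauss--Seidel dependence of $\mathcal{B}^t$ on earlier bundles in the sweep is legitimate, but it is shared with (and glossed over by) the paper, which treats $\mathcal{B}^t$ as a uniformly random $P$-subset independent of $\mathbf{w}^t$ at each inner step.
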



Based on Theorem~\ref{theorem:convergence rate}, we 
obtain the upper bound ($T_{\epsilon}^{\mathrm{up}}$) of the iteration
number $T_{\epsilon}$ satisfying a specified accuracy $\epsilon$:
\begin{equation}\label{equ:Iteration Upper Bound}
\begin{split}
T_{\epsilon} & \leq
\frac{n\mathbf{E}[\bar{\lambda}(\mathcal{B}^t)]}{
	P\epsilon} \cdot  \frac{ \theta c}{2\xi}\left[  \|\mathbf{w}^{*}\|^2
+\frac{\F(\mathbf{0})}{\sigma
	(1-\gamma)\underline{h}} \right] \\
& := T_{\epsilon}^{\mathrm{up}} \propto
\frac{\mathbf{E}[\bar{\lambda}(\mathcal{B}^t)]}{P\epsilon},
\end{split}
\end{equation}
which means that the proposed PCDN algorithm achieves
speedups linear in the bundle size $P$ compared to the CDN method if
$\mathbf{E}[\bar{\lambda}(\mathcal{B}^t)]$ remains constant\footnote{If
	we perform feature-wise normalization over the training data $X$ to
	ensure $\lambda_1=\lambda_2=\cdots=\lambda_n$, then
	$\mathbf{E}[\bar{\lambda}(\mathcal{B}^t)]$ remains constant according
	to Lemma \ref{lemma:hessian bound}(\ref{lemma:ebt}).}. 
In general,
$\mathbf{E}[\bar{\lambda}(\mathcal{B}^t)]$ increases with respect to
$P$ (from Lemma~\ref{lemma:hessian bound}(\ref{lemma:ebt})), and thus
makes the speedup sublinear.
Furthermore, since
$\mathbf{E}[\bar{\lambda}(\mathcal{B}^t)]/P$ decreases with respect to
$P$ from Lemma~\ref{lemma:hessian bound}(\ref{lemma:ebt}),
$T_{\epsilon}^{\mathrm{up}}$ decreases with respect to $P$, and so
does $T_{\epsilon}$.
Thus the PCDN algorithm requires fewer iterations with larger
bundle size $P$ to converge to  $\epsilon$ accuracy.

\setkeys{Gin}{width=0.45\textwidth}
\begin{figure}[t]
	\begin{center}
		\subfloat[\textsf{real-sim},  Logistic regression]{
			\includegraphics{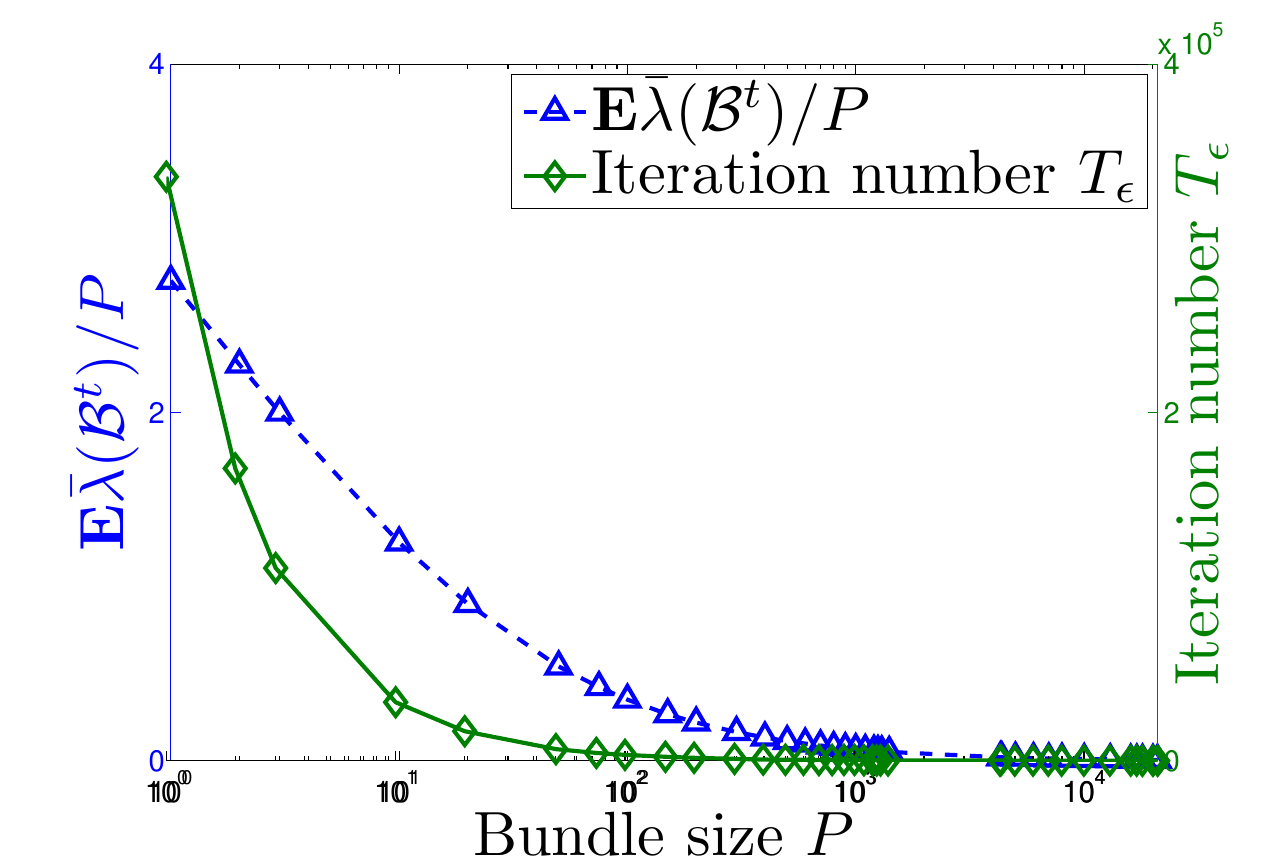}}
		\subfloat[\textsf{a9a}, $L_2$-loss SVM classification]{
			\includegraphics{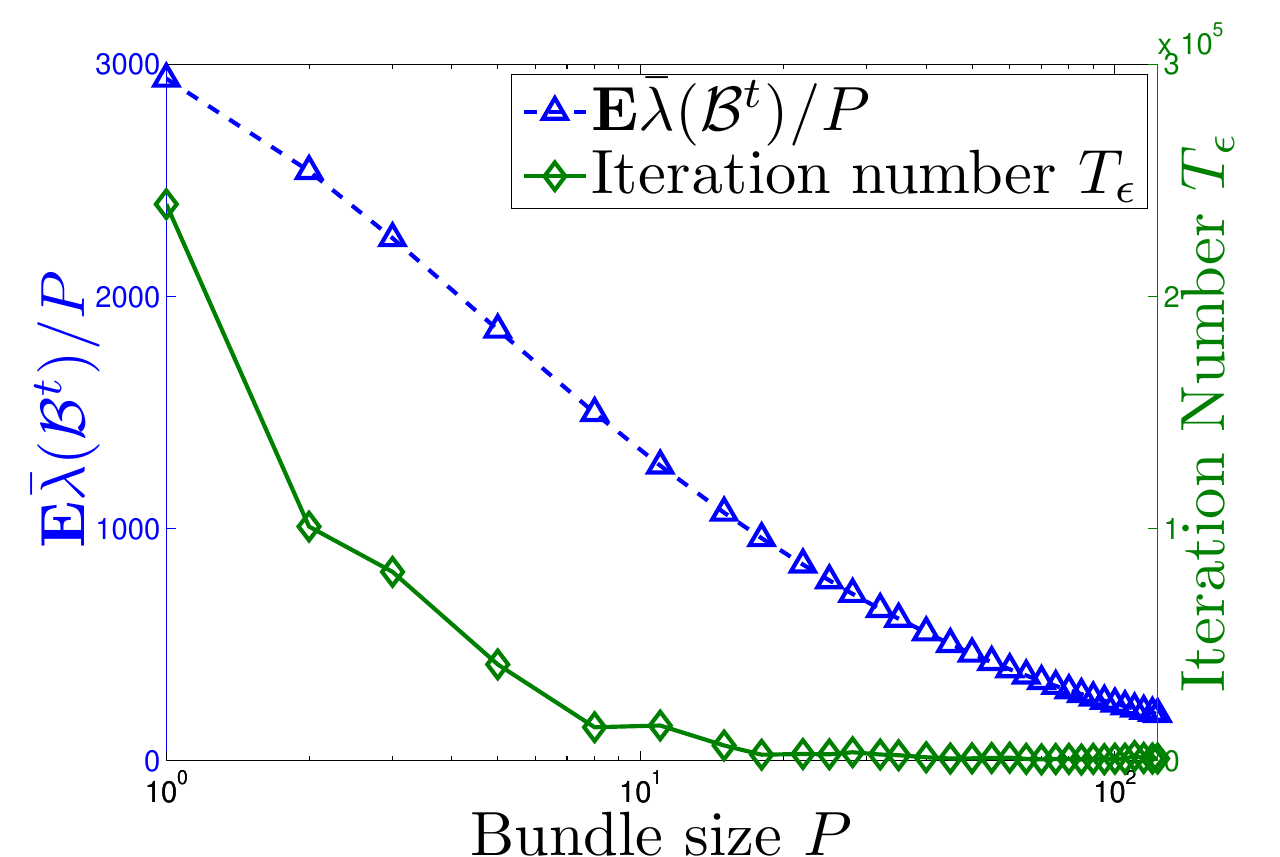}}
	\end{center}
	\caption{$\mathbf{E}[\bar{\lambda}(\mathcal{B}^t)]/P$ and  $T_{\epsilon}$ as a function of bundle size $P$.}
	\label{fig:E3and Iteration Number}
\end{figure}

To verify the upper bound $T_{\epsilon}^{\mathrm{up}}$
\eqref{equ:Iteration Upper Bound} of the iteration number
$T_{\epsilon}$ for a given accuracy $\epsilon$, we set
$\epsilon=10^{-3}$ and show the iteration number $T_{\epsilon}$ as a
function of the bundle size $P$ in Figure~\ref{fig:E3and Iteration
	Number}, where two document datasets, \textsf{a9a} and
\textsf{real-sim} (See Section~\ref{sec:datasets} for details about
the datasets) are used.
Since $T_{\epsilon}^{\mathrm{up}}$ is proportional to 
${\mathbf{E}[\bar{\lambda}(\mathcal{B}^t)]}/{P}$, we plot 
${\mathbf{E}[\bar{\lambda}(\mathcal{B}^t)]}/{P}$ instead of
$T_{\epsilon}^{\mathrm{up}}$ in Figure~\ref{fig:E3and Iteration
	Number}.
The results match the upper bound in~\eqref{equ:Iteration
	Upper Bound}: for given $\epsilon$, $T_{\epsilon}$ (solid green
lines) is positively correlated with
$\mathbf{E}[\bar{\lambda}(\mathcal{B}^t)]/P$ (dotted blue
lines).        
In addition, $T_{\epsilon}$ decreases with respect to $P$. 
These results show that with larger bundle size $P$, fewer
iterations are needed by the PCDN algorithm to converge to $\epsilon$
accuracy.

\section{Experiments}
\label{sec:experiments}

In this section we present experimental results of the proposed PCDN algorithm,  with
comparisons to the state-of-the-art methods on $L_1$-regularized loss minimization problems using several  benchmark datasets.

\subsection{Experimental Setup}
\label{sec:datasets}

\begin{table*} [hbtp]
	\caption{\small Summary of datasets:.  is The number of non-zero elements (NNZs) 
		in training data is denoted by ``train NNZ";  the average number of NNZs in the data corresponding to each feature
		is ``NNZ/feature" denotes;
		; ``spa." means train data sparsity, which is the ratio of
		zero elements in $\X$;  ``$c^*$  SVM" and  ``$c^*$ logistic" denote the best
		regularization parameter $c^*$ for $L_2$-loss SVM and logistic
		regression, respectively, which are determined according to \cite{DBLP:journals/jmlr/YuanCHL10}.
	}
	\label{table: data}
	\centering
	\small 
	\begin{tabular}{|r|r|r|r|r|r|r|r|r|}
		\hline
		Dataset & $s$  & $n$ & train NNZ & NNZ/feature  & spa./\%&  $c^*$  SVM & $c^*$ logistic \\
		\hline \hline
		\textsf{a9a} & 26,049  & 123 & 361,278 &2,937 &  88.72 & 0.5 & 2.0   \\
		\hline
		\textsf{real-sim} & 57,848   & 20,958 & 2,968,110 & 142 & 99.76  & 1.0 & 4.0    \\
		\hline
		\textsf{news20} & 15,997 & 1,355,191&7,281,110 &5 & 99.97 & 64.0 & 64.0   \\
		\hline
		\textsf{gisette} & 6,000  & 5,000&29,729,997 &5,946 & 0.9 & 0.25  & 0.25   \\
		\hline
		\textsf{rcv1} & 541,920  & 47,236&39,625,144 &839 & 99.85 & 1.0 & 4.0  \\
		\hline
		\textsf{kdda} & 8,407,752 & 20,216,830& 305,613,510 & 15 & 99.99 & 1.0 & 1.0 \\
		\hline 
		\textsf{webspam} & 280,000 & 16,609,143&1,043,724,776& 63 &99.9775 & 64.0 & 64.0 \\
		\hline
	\end{tabular}
\end{table*}

{\flushleft \textbf{Datasets.}}  
Seven benchmark datasets\footnote{The datasets are available at
	\url{http://www.csie.ntu.edu.tw/~cjlin/libsvmtools/datasets}.}
are used in our experiments and the characteristics are summarized in Table \ref{table: data}.
The \textsf{news20}, \textsf{rcv1}, \textsf{a9a} and \textsf{real-sim} datasets consist of document data points that are normalized to unit vectors.
The \textsf{a9a} dataset is from UCI data repository, and 
the \textsf{gisette} set consists of handwriting digit data points from the NIPS 2003 feature selection challenge where features are linearly scaled to the $[-1,1]$ interval. 
The \textsf{kdda} dataset has been used for the KDD Cup 2010 data mining competition. The \textsf{webspam} dataset is the collection of Web pages that are created to manipulate search engines and deceive Web users.


{\flushleft \textbf{Bundle Size  Choice.}} 
For each dataset, the optimal bundle
size $P^*$ under which the PCDN algorithm achieves minimum runtime is
determined as follows.
For Algorithm~\ref{alg:pcdn}, the expected runtime of the $t$-th inner
iteration of PCDN $\mathrm{time}(t)$ can be  approximated by
\begin{equation}
\label{equ:timet}
\mathbf{E}[\mathrm{time}(t)]\approx (P/\#\mathrm{thread}) \cdot t_{dc}
+ \mathbf{E}[q^t] \cdot t_{ls},
\end{equation}
where the expectation is based on a random choice of
$\mathcal{B}^t$; $\#\mathrm{thread}$ is the number of threads used
by PCDN and fixed to be 23 in our experiments; $t_{dc}$ is the time for
computing the descent direction (step~\ref{pcdn:dc} in
Algorithm~\ref{alg:pcdn}); $t_{ls}$ is the time for a step of
$P$-dimensional line search, which is approximately constant with
varying $P$ (as shown in Section \ref{sec:cost-pcdn-cdn} of the appendix).

\setkeys{Gin}{width=0.45\textwidth}
\begin{figure}[h]
	\centering
	\subfloat[Logistic regression]{
		\includegraphics{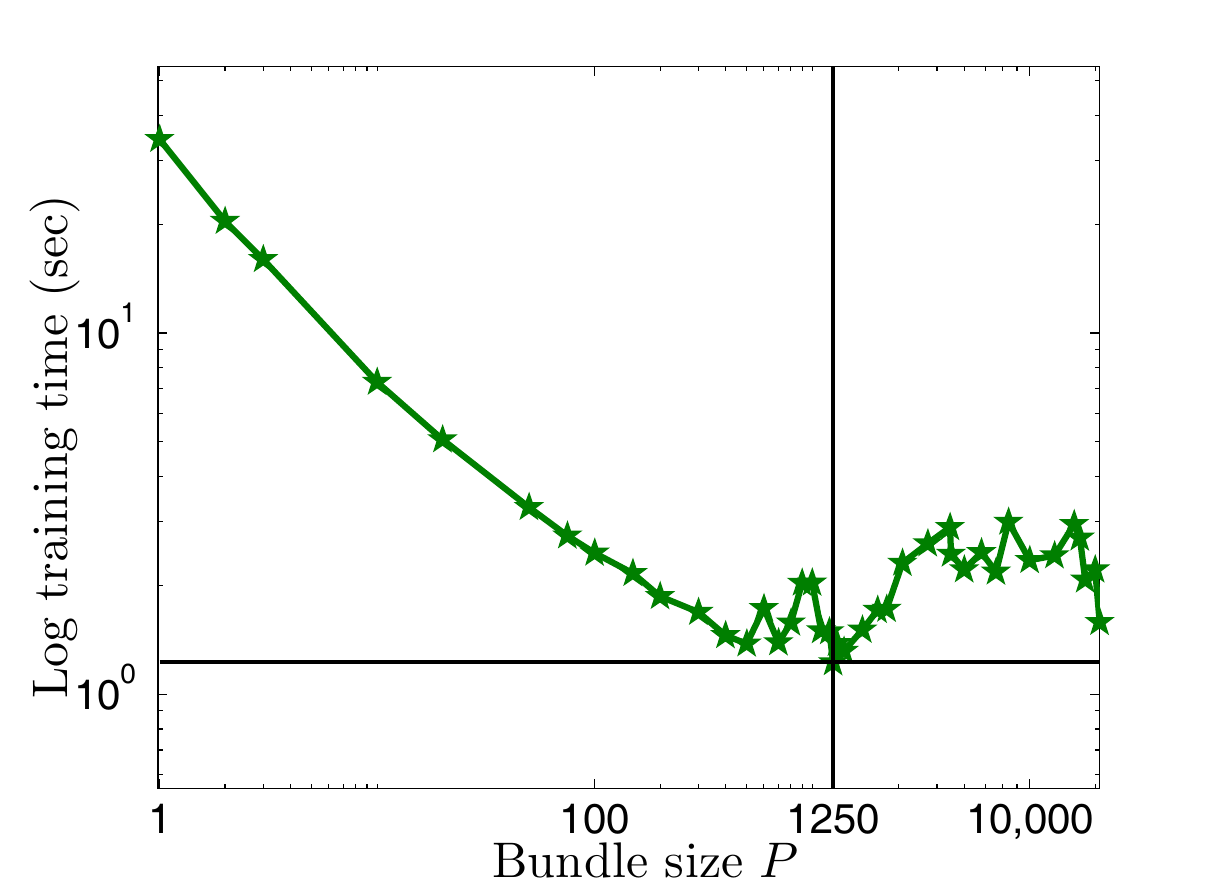}}\hspace{-0.025\textwidth}
	\subfloat[$L_2$-loss SVM classification]{
		\includegraphics{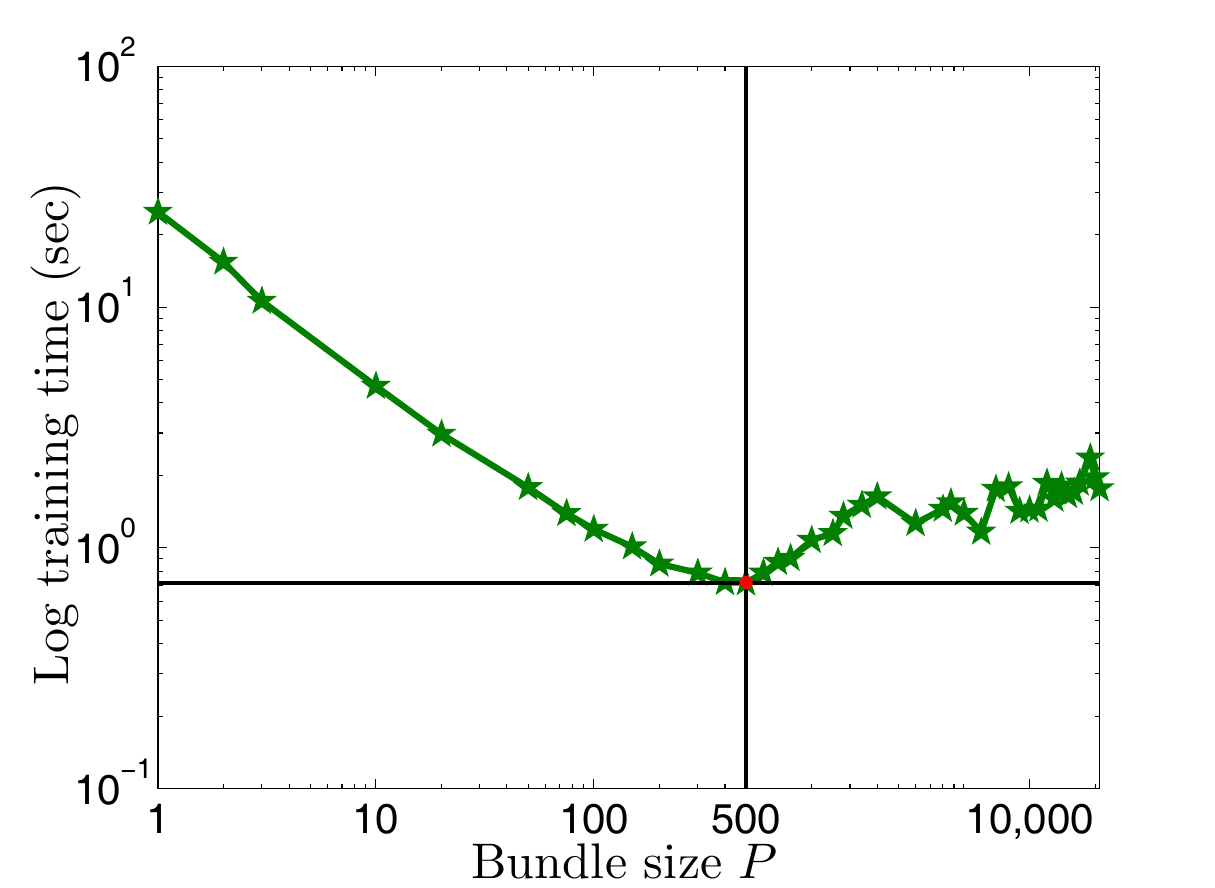}}
	\caption{Training time v.s. bundle size $P$ for the \textsf{real-sim}
		dataset with stopping criteria $\epsilon=10^{-3}$, the intersection of the horizontal
		and vertical black line  shows the optimal bundle size $P^*$.}
	\label{fig:time-vs-p}
\end{figure}
\begin{table} [h] 
	\caption{\small 
		Optimal bundle size $P^*$ for each dataset. $\#\mathrm{thread}=23$. The second row shows $P^*$ for logistic regression, the third row shows $P^*$ for $L_2$-loss SVM.
	}
	\label{table: pstar}
	\centering
	\begin{tabular}{|c|c|c|c|c|c|c|c|} 
		\hline
		\textsf{a9a} & \textsf{real-sim}  & \textsf{news20} &  \textsf{gisette} & \textsf{rcv1} & \textsf{kdda} & \textsf{webspam} \\
		\hline \hline
		123 & 1250 & 400  & 20 & 1600 & 29500 & 31750 \\
		\hline
		85  & 500 & 150   & 15 & 350 & 95000  & 86000 \\
		\hline
	\end{tabular}
\end{table}

As $\mathbf{E}[q^t]$ increases with respect to the bundle size
$P$ (based on Theorem~\ref{theorem:line search}),  
$\mathbf{E}[\mathrm{time}(t)]$ increases with respect to
$P$ based on \eqref{equ:timet}.
In addition, as the PCDN algorithm requires fewer iterations for larger
$P$ to converge to $\epsilon$ accuracy (from~\eqref{equ:Iteration
	Upper Bound} in
Section~\ref{sec:Analysis of PCDN}),
it is essential to make a trade-off between the increasing
runtime per iteration $\mathbf{E}[\mathrm{time}(t)]$ and the
decreasing iteration number $T_{\epsilon}$ to select the optimal
bundle size $P^*$.
In practice, we run  PCDN  with varying $P$.
Figure~\ref{fig:time-vs-p} shows the training time as a function of
bundle size $P$ for the \textsf{real-sim} dataset and
the optimal bundle size $P^*$ can be determined.
In this work, we empirically select the optimal
$P^*$ for each dataset (See Table~\ref{table: pstar}).

We note that it is not necessary to obtain  the optimal $P$ to achieve
significant speedup,  as a wide range of $P$ will suffice to achieve the
same goal in practice. 
As shown in Figure~\ref{fig:time-vs-p}(a), when $P$ is greater than
500, it achieves considerable speedup higher than SCDN (5 times faster
than SCDN for $P$=500). 
For a new dataset, one can first select a relatively large $P$ (about
5\% of \#features) with the most relaxed stopping criteria for a pilot
experiment, and then adjust $P$ for best performance when necessary. 
%

{\flushleft \textbf{Evaluated Methods.}} We evaluate the proposed PCDN algorithm
against  the state-of-the-art $L_1$-regularized optimization approaches, 
including newGLMNET \cite{DBLP:conf/kdd/YuanHL11}, 
CDN~\cite{DBLP:journals/jmlr/YuanCHL10}, Shotgun-CDN (SCDN)\footnote{Since the
	experimental validation in \cite{DBLP:conf/icml/BradleyKBG11} has shown that
	SCDN  is much faster than the SGD-type algorithms
	(including {SGD}, \algname{Parallel SGD} \cite{Langford:2009,DBLP:conf/nips/RechtRWN11}, and \algname{SMIDAS} \cite{DBLP:conf/icml/Shalev-ShwartzT09}) for datasets with more
	features,  and SCDN performs well on datasets with more samples than features,
	we only compare the PCDN algorithm with the SCDN scheme
	here. 
	The SCDN algorithm is also a competitive representative of the generic parallel coordinate
	descent algorithms in  \cite{scherrer2012scaling}.}~\cite{DBLP:conf/icml/BradleyKBG11}, interior-point method (IPM) \cite{koh2007interior} and trust region
Newton (TRON)~\cite{Lin:1999:NML:588897.589173} methods 
with C/C++ implementations. 
For the Armijo line search
procedure \eqref{equ:qrmijo} in the PCDN, CDN and SCDN methods,
we  set $\sigma=0.01$, $\gamma=0$ and $\beta=0.5$ for fair
comparisons.
The OpenMP library is used for parallel programming.
%
The stopping criteria used in the experiments are similar to the
outer stopping condition used  in~\cite{DBLP:conf/kdd/YuanHL11}.
%


The source code of the proposed PCDN algorithm 
will be made available to the public, and 
the implementation details are listed below:
\begin{itemize}
	\item CDN: we implement this method 
	based on the source code in the LIBLINEAR\footnote{liblinear version 1.7,
		\url{http://www.csie.ntu.edu.tw/~cjlin/liblinear/}.} toolbox.
	Since the shrinking procedure cannot be performed inside the parallel loop
	of the SCDN and PCDN methods, we use an equivalent
	implementation of the CDN  scheme for fair comparisons, where the
	shrinking procedure is modified such that it is consistent with the
	other parallel algorithms.
	\item SCDN:  We set $\bar{P}=8$ for the SCDN method
	following Bradley et al.~\cite{DBLP:conf/icml/BradleyKBG11}.
	%
	\item PCDN: We implement this algorithm
	with conditions consistent with all other methods.
	
	\item TRON:
	We set $\sigma = 0.01$ and $\beta=0.1$ in the projected
	line search according to Yuan et
	al.~\cite{DBLP:journals/jmlr/YuanCHL10}. We use
	it as baseline algorithm for $L_2$-loss SVM.

	\item newGLMNET:  We use the same setting and 
	implementation provided by Yuan et al. \cite{DBLP:conf/kdd/YuanHL11}.
	Since it is outperformed by CDN for $L_2$-loss SVM, we only 
	use it as baseline algorithm  for logistic regression experiments.
	
	\item IPM:  We use the source code\footnote{version 0.8.2, \url{http://www.stanford.edu/~boyd/l1_logreg/}}
	and default settings in  \cite{koh2007interior}. 
	We use it as  a baseline interior-point algorithm for logistic regression.
	
	%
\end{itemize}

{\flushleft \textbf{Platform.}}
All experiments are carried out on a 64 bit
machine with Intel Xeon 2.4 GHz CPU and 64 GB main
memory.
%
We set $\#\mathrm{thread} = 23$ for PCDN on a 24-core machine, which
is far less than the optimal bundle size $P^*$ given in Table~\ref{table:
	pstar}.
%
%
We note that the descent direction (step~\ref{pcdn:dc} in
Algorithm~\ref{alg:pcdn}) of the PCDN algorithm can be fully
parallelized on several hundreds even to thousands of threads.
%

\subsection{$L_1$-Regularized $L_2$-Loss SVM}
\label{sec:expsvm}

\setkeys{Gin}{width=0.33\textwidth}
\begin{figure}[htbp]
	\tiny
	\centering
	\begin{tabular}{@{}c@{}c@{}c@{}}
		\includegraphics{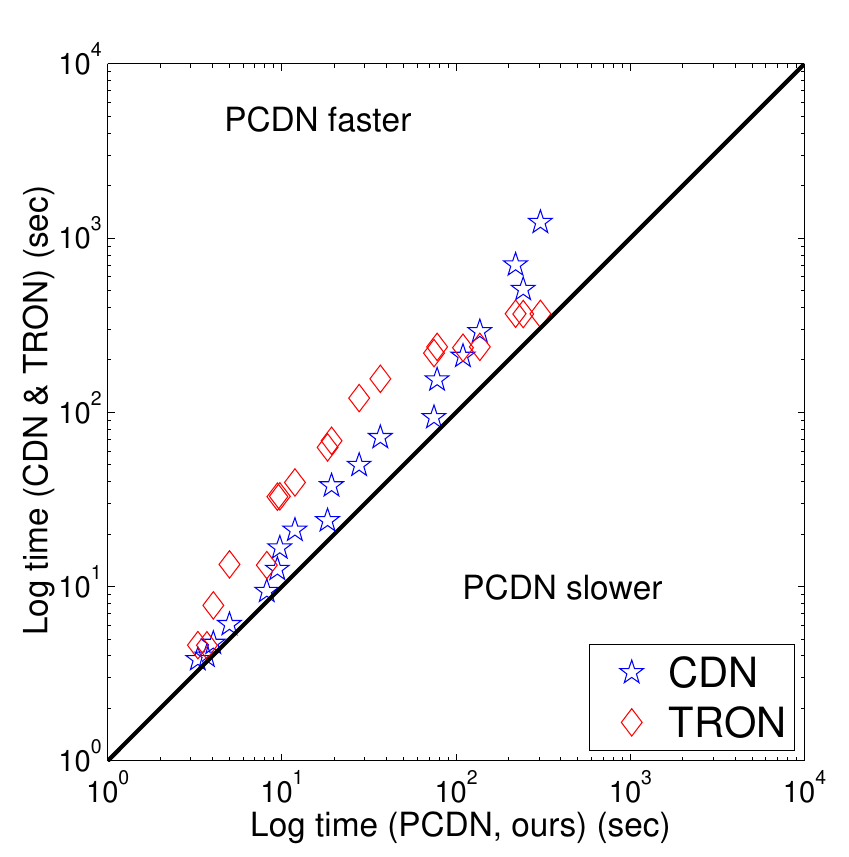} &\hspace{-2mm}
		\includegraphics{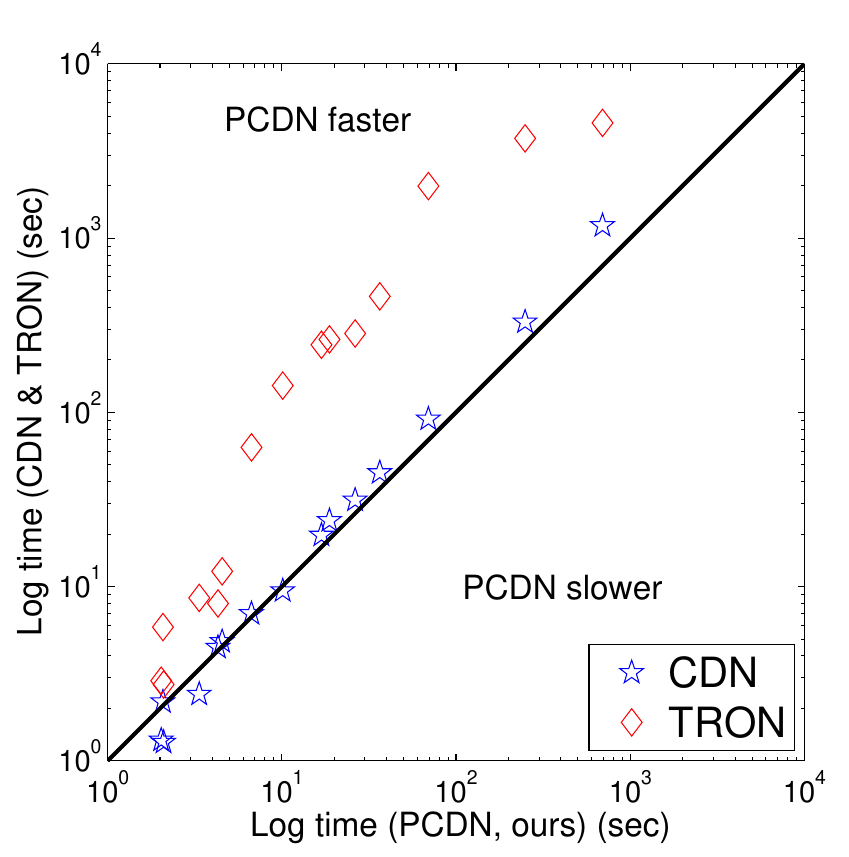} & \hspace{-2mm}
		\includegraphics{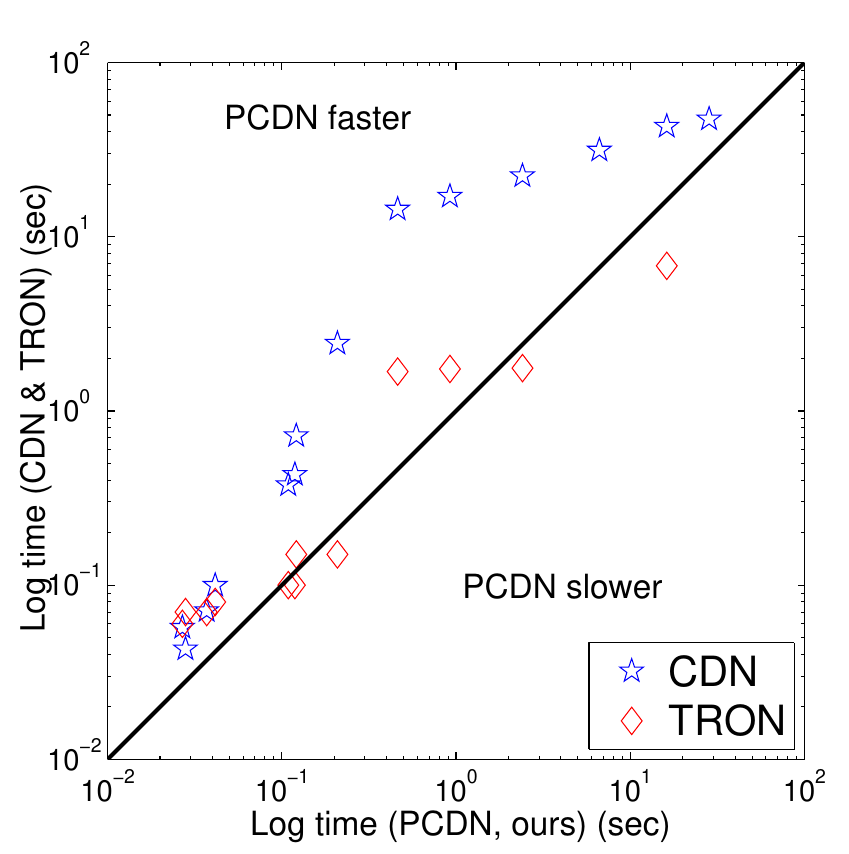}\\
		(a) \textsf{rcv1} $c^*$ = 4.0 &
		(b) \textsf{news20} $c^*$ = 64.0 &
		(c) \textsf{a9a} $c^*$ = 0.5
	\end{tabular}
	\caption{Runtime comparisons for $L_2$-loss SVM classification 
		where each marker
		compares a solver with PCDN on one dataset.
		The $y$-axis and $x$-axis show the
		runtime of a solver as well as PCDN on the
		same problem. 
		Markers above the diagonal line indicate that PCDN is
		faster.}
	\label{fig:SVM}
\end{figure}

Figure~\ref{fig:SVM} shows the runtime performance of the PCDN, CDN
and TRON methods  with the
best regularization parameter $c^*$ (determined based on Yuan
et al.~\cite{DBLP:journals/jmlr/YuanCHL10}) and varying stopping
criteria $\epsilon$ (equivalent for three solvers).
Experimental results show that the proposed PCDN algorithm performs favorably
against the other methods.
As a feature-based parallel algorithm, the proposed PCDN solver
performs well for sparse datasets with more features
as shown by the results on the
\textsf{rcv1} and \textsf{news20} datasets, which are very sparse
(training data sparsity, defined by the ratio
of zero elements in design matrix $\mathbf{X}$ and explained in Table \ref{table: data}, 
is $99.85\%$ and $99.97\%$, respectively)
with a large number of features (47,236 and 1,355,191).
In such cases, the PCDN algorithm performs well against the TRON
method.
For the \textsf{news20} dataset, the PCDN solver is 29 times faster
than TRON method and 18 times faster than CDN approach.
We note that for the \textsf{a9a} dataset, the
PCDN solver is sometimes slightly slower than the TRON method
since it is a relatively dense dataset with fewer
features than samples (only 123 features with 26,049 samples).

\subsection{$L_1$-Regularized Logistic Regression}
\label{sec:explr}

\setkeys{Gin}{width=0.38\textwidth}
\begin{figure}[htbp]
	\begin{tabular}{m{2.4cm} m{0.38\textwidth} m{0.38\textwidth}}
		(a) { \textsf{rcv1} $\epsilon$ \rm{=} $10^{-3}$} & 	\includegraphics{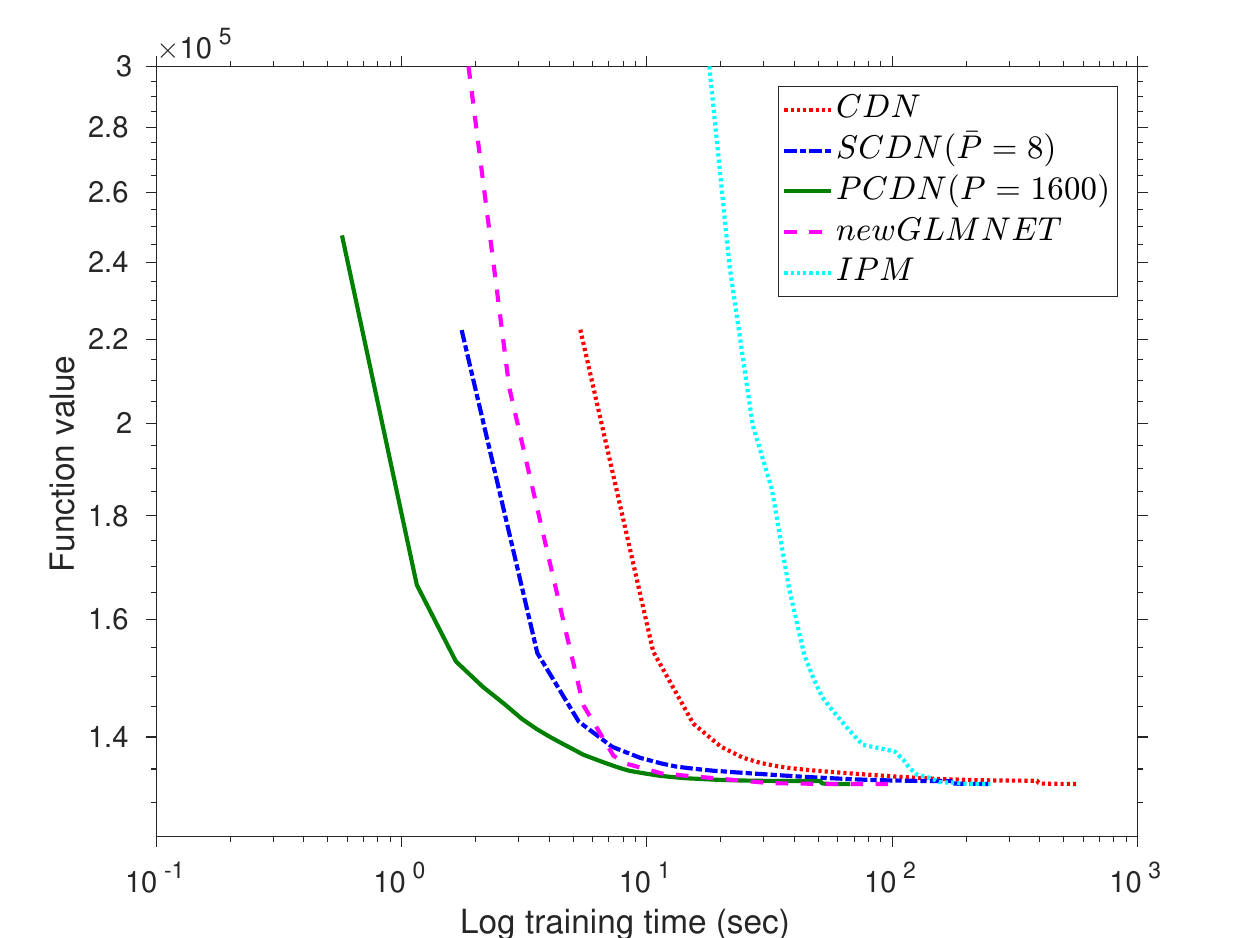} & \includegraphics{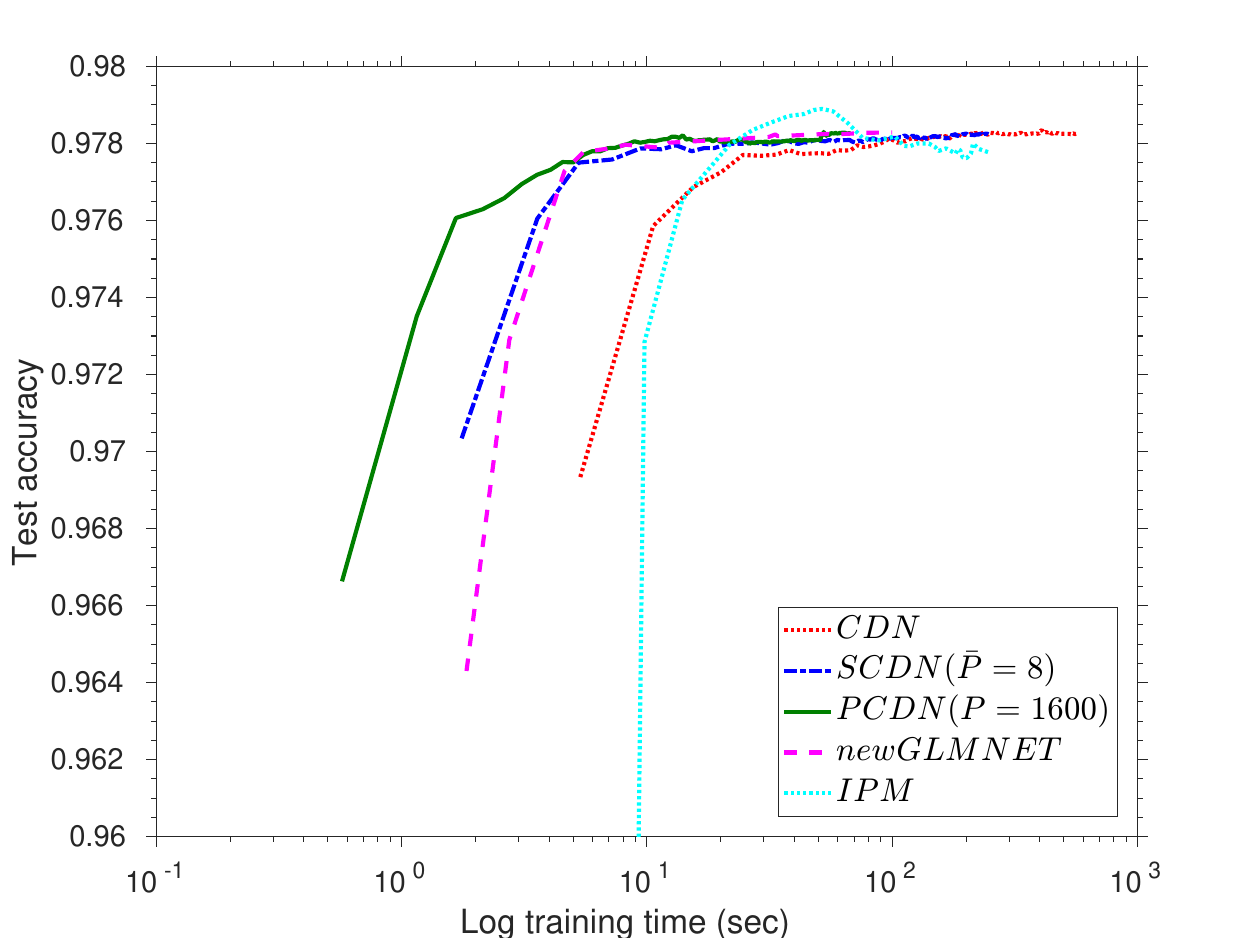} \\
		\vspace{-0.5cm}
		(b) { \textsf{gisette}  $\epsilon$ \rm{=} $10^{-4}$}& 
		\includegraphics{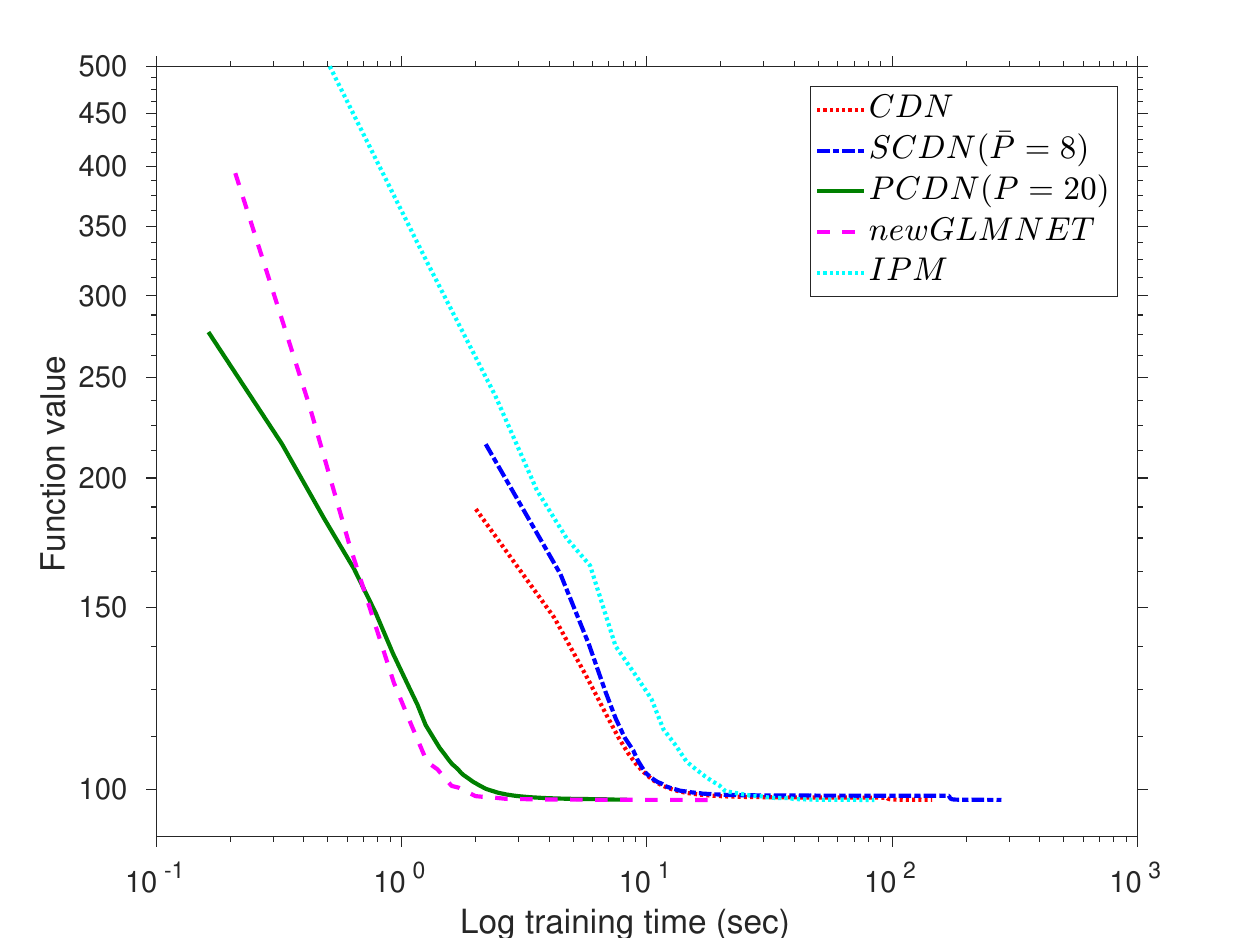}&	\includegraphics{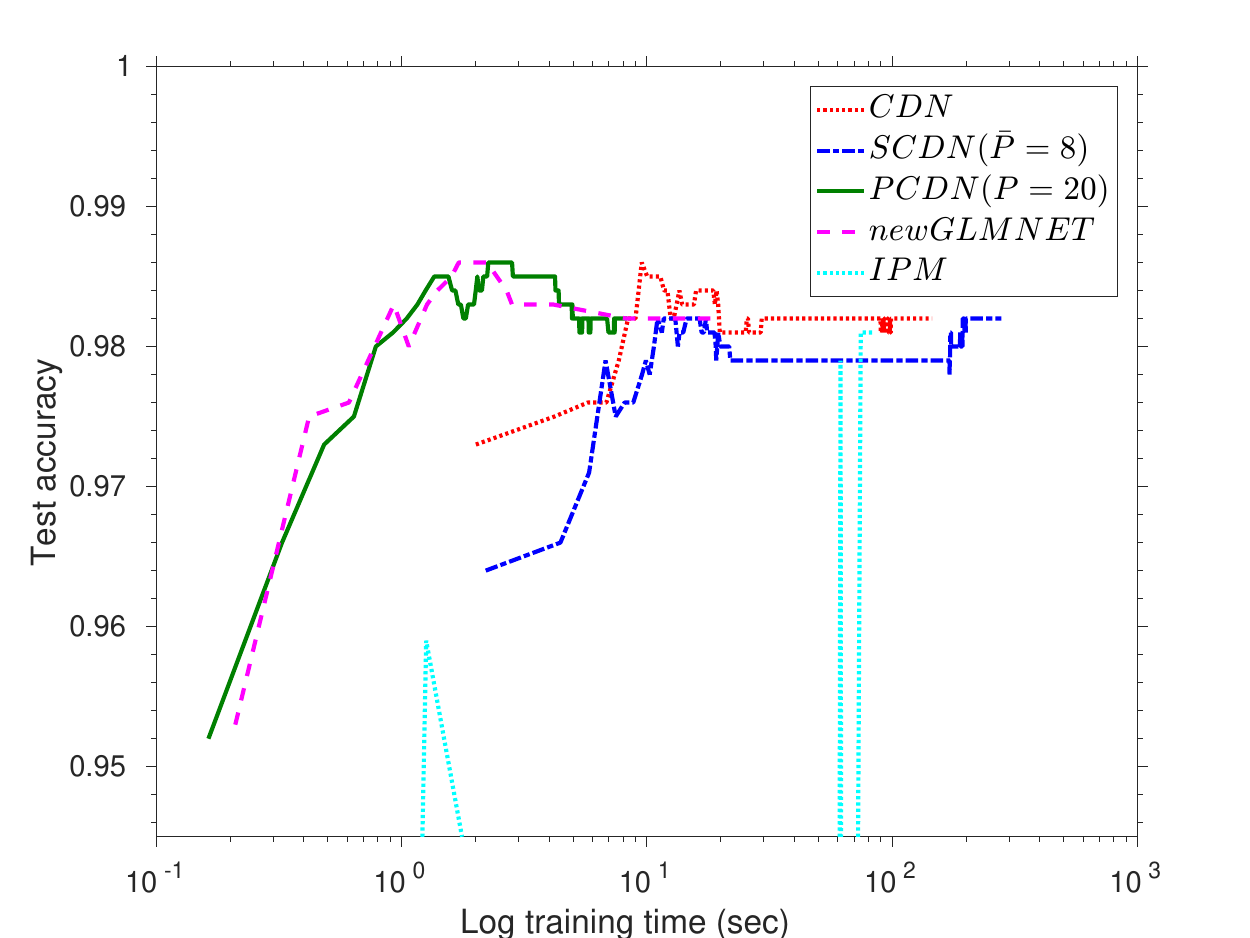} \\
		(c) { \textsf{real-sim}   $\epsilon$ \rm{=} $10^{-6}$} &  
		\includegraphics{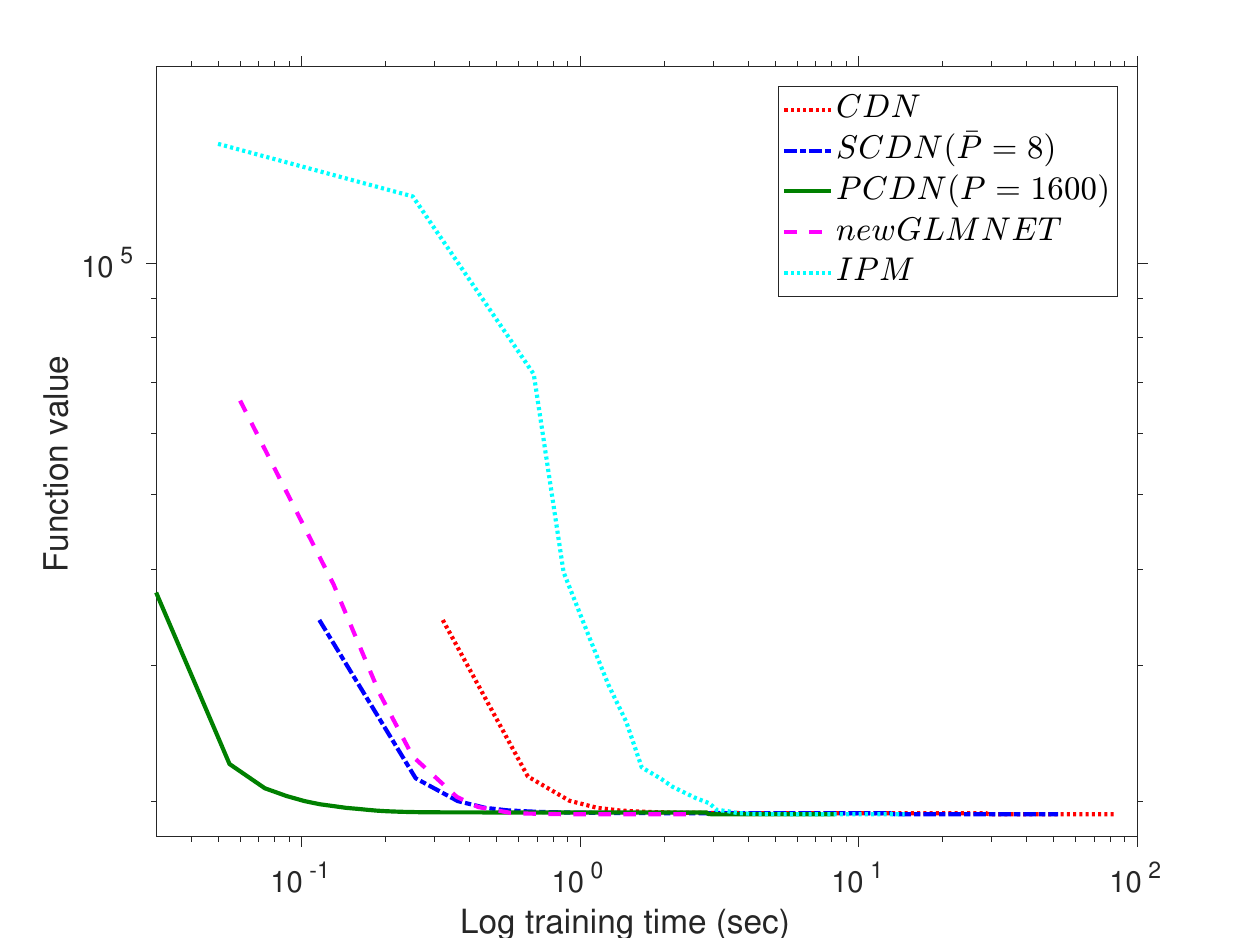} & 	\includegraphics{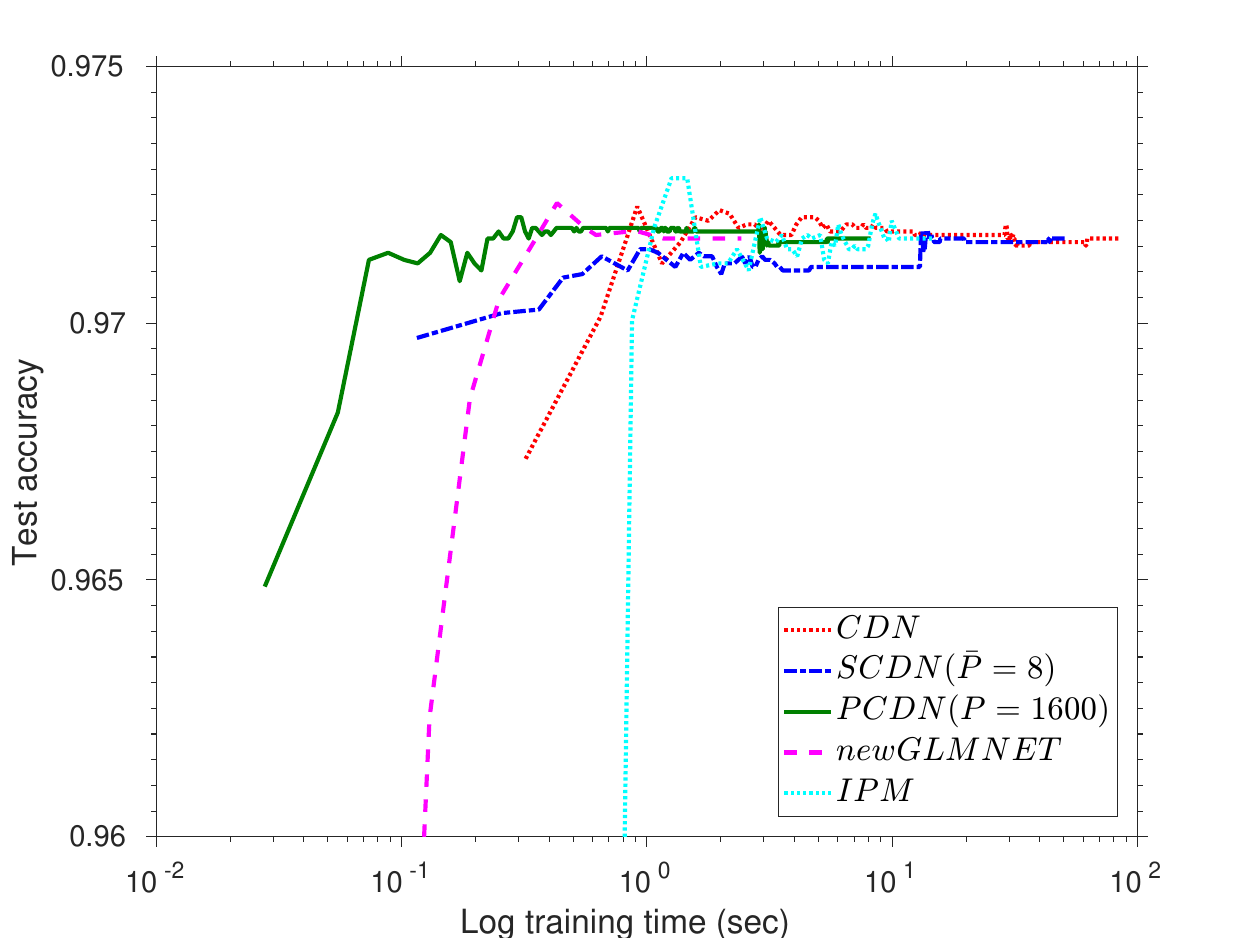} \\
		(d) { \textsf{kdda}  $\epsilon$ \rm{=} $3.7*10^{-3}$} & 
		\includegraphics{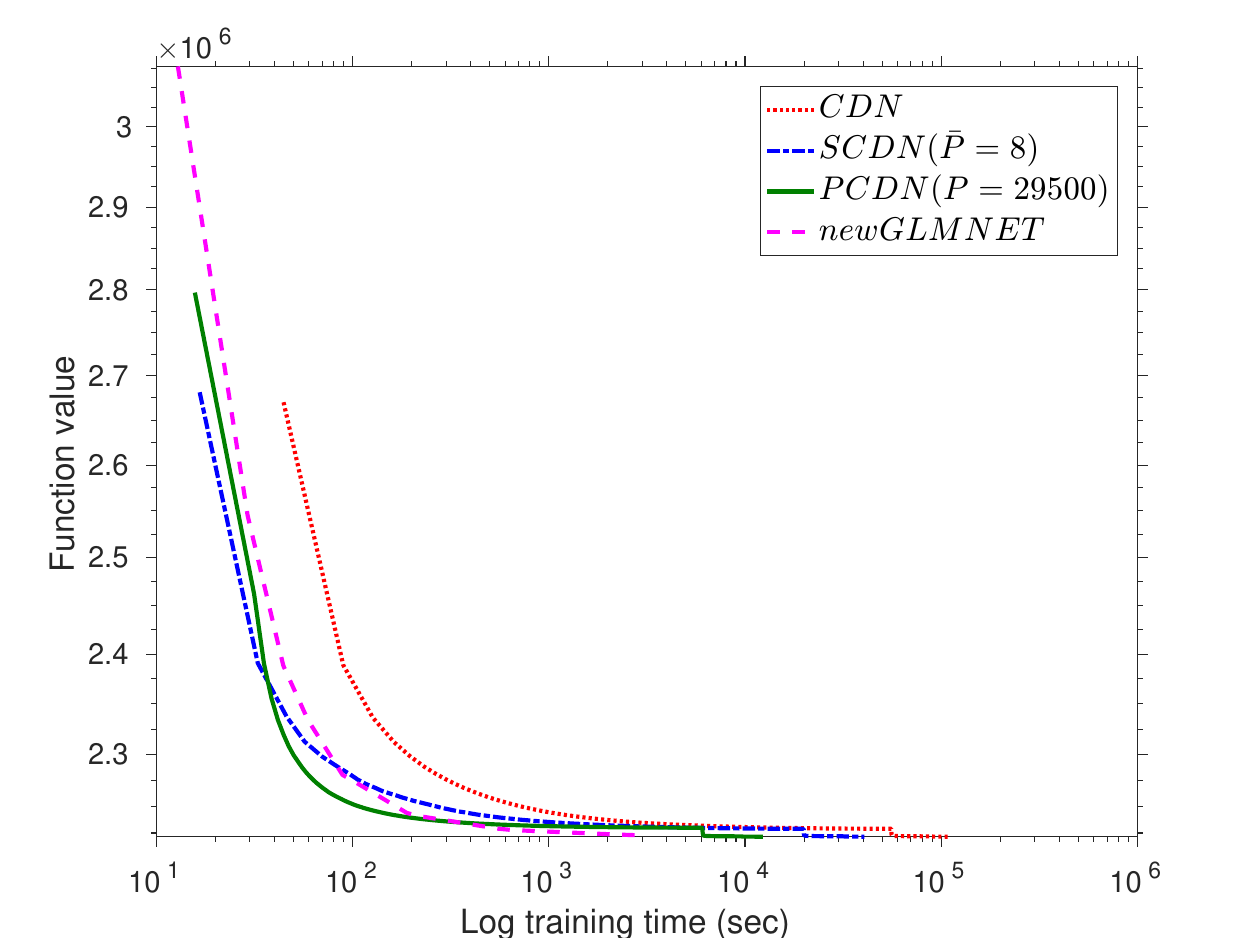}  &
		\includegraphics{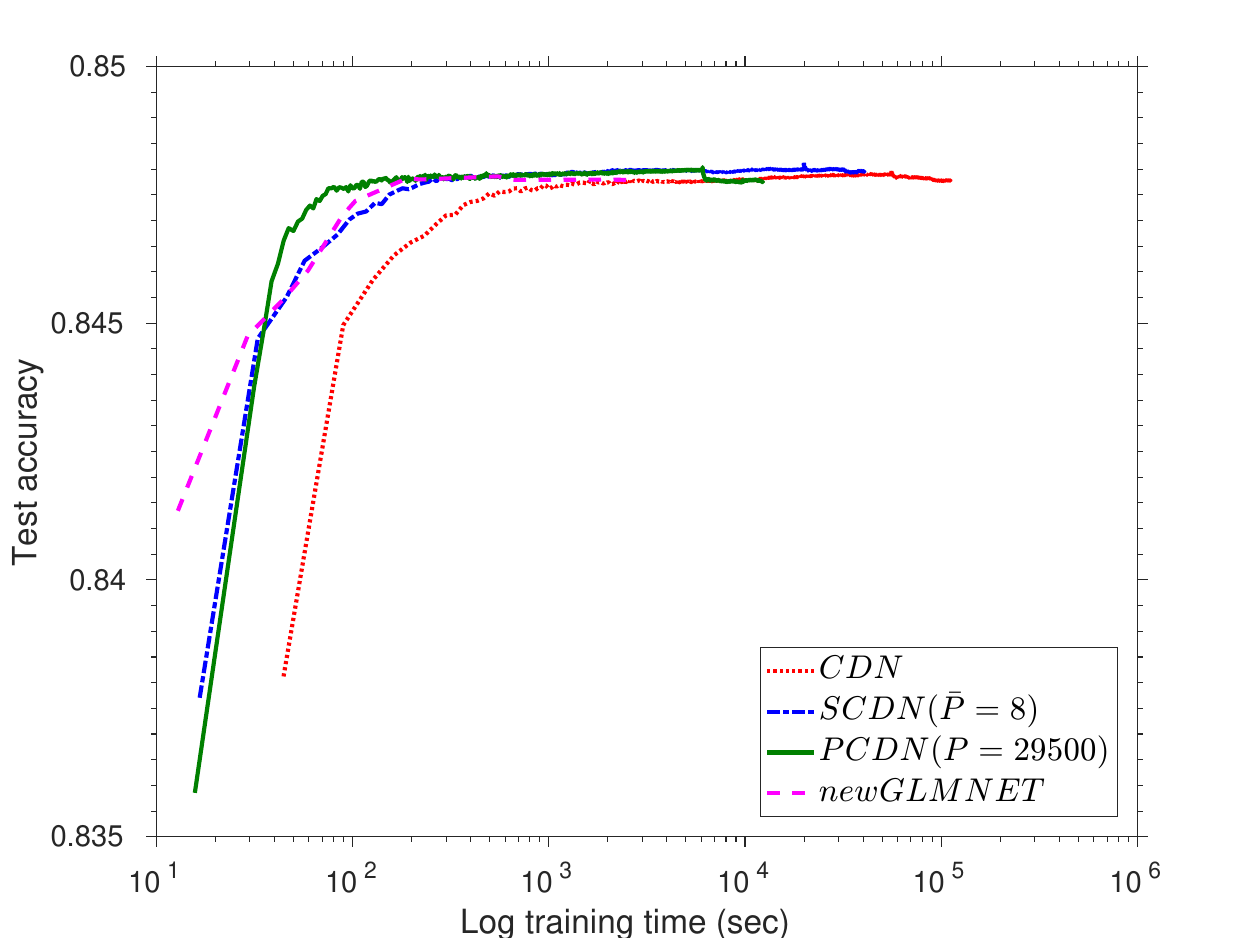} \\
		(e) { \textsf{webspam}  $\epsilon$ \rm{=} $10^{-2}$} & 
		\includegraphics{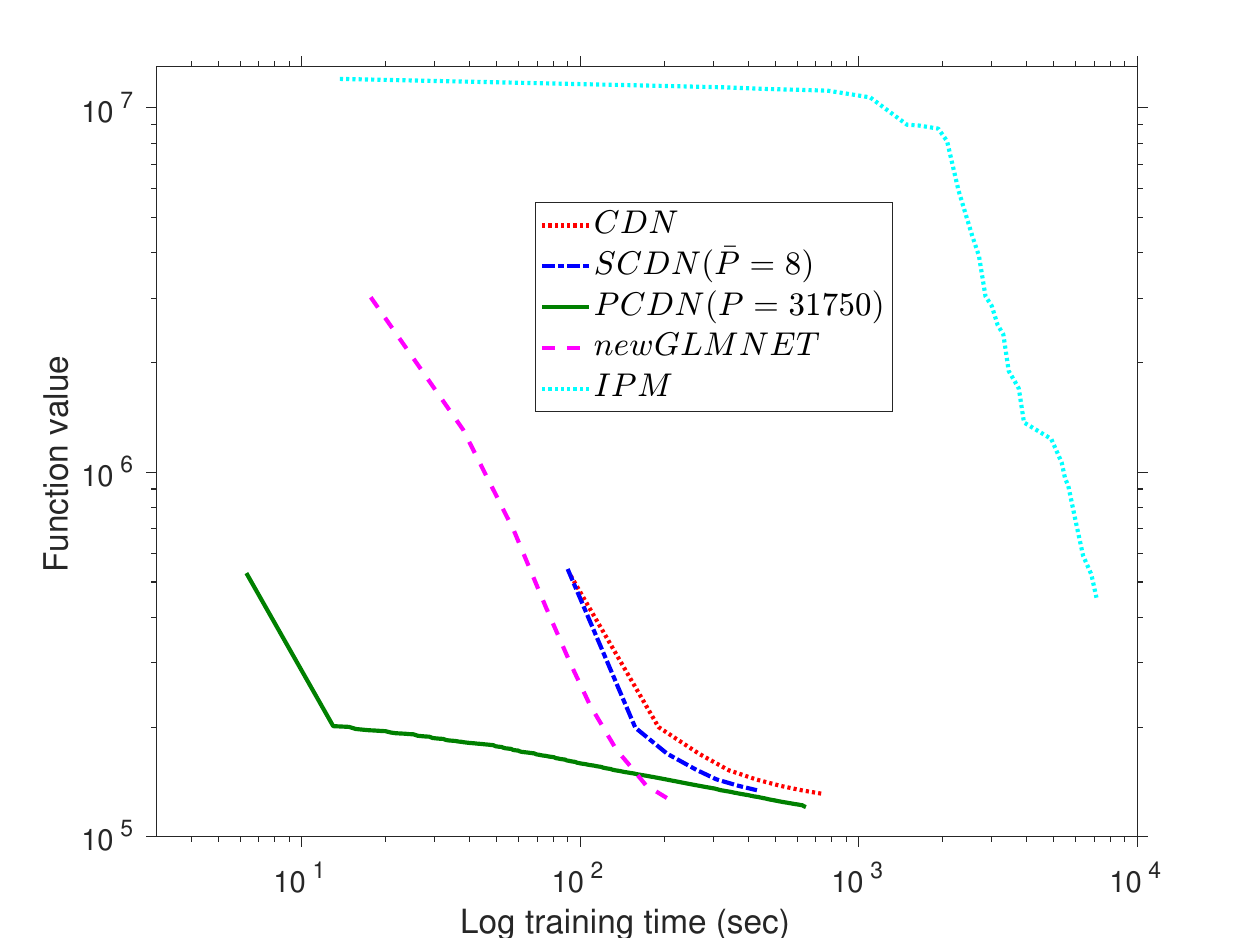} & 
		\includegraphics{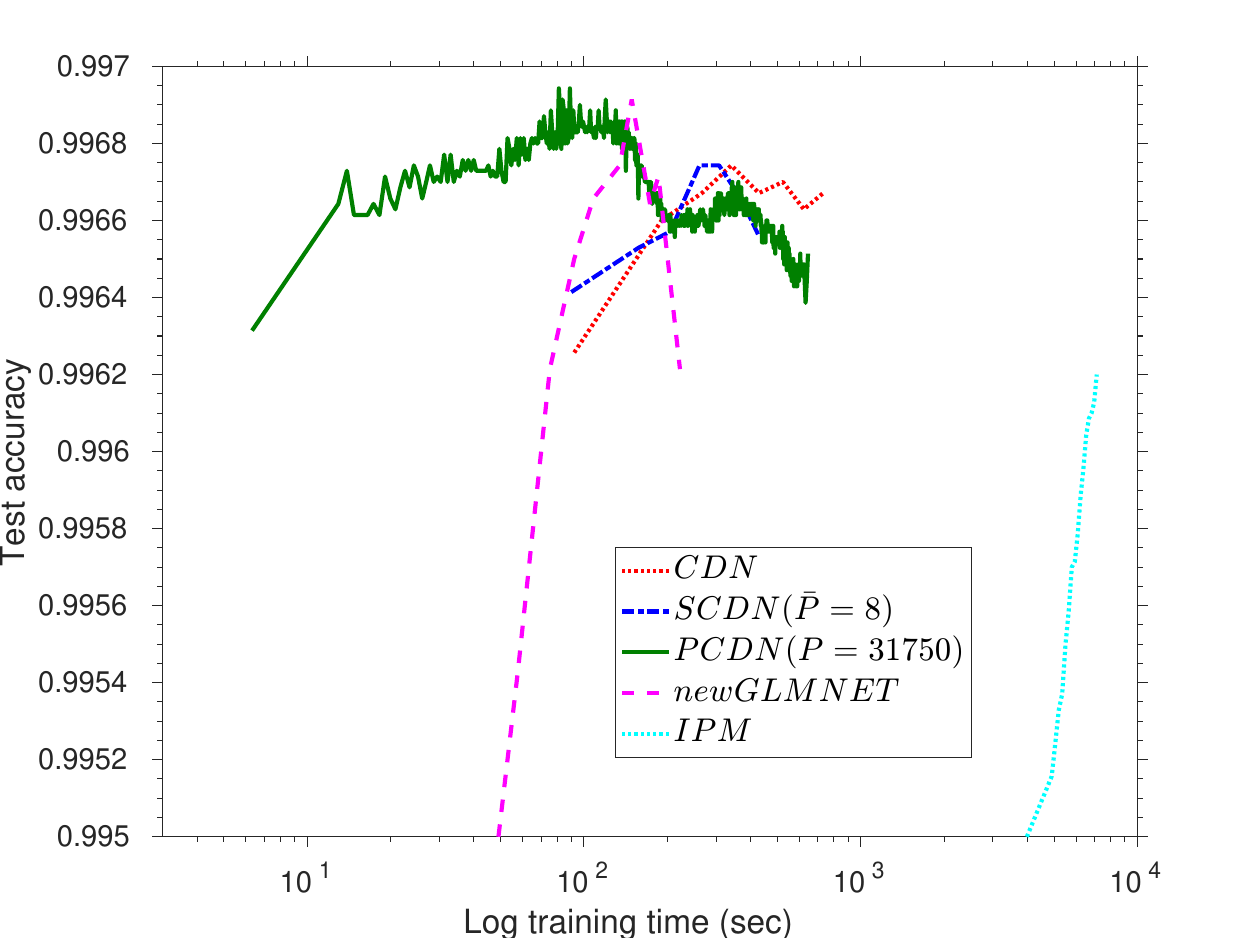} 
	\end{tabular}
	\caption{Runtime performance of PCDN, newGLMNET,  SCDN, IPM and CDN
		for logistic regression. First column: function value. Second column: test accuracy.}
	\label{fig:Time performance of logistic regression}
\end{figure}
%
%

We compare the runtime performance of the  PCDN algorithm with the newGLMNET, IPM,  SCDN and CDN
methods on $L_1$-regularized logistic regression with a bias term.
Figure~\ref{fig:Time performance of logistic
	regression} shows the trace of function value (row 1) and
test accuracy (row 2)
with respect to the log runtime. 
Overall, the PCDN solver performs favorably against the other methods
where the best speedup over the CDN method is 17.49
(with $\#\mathrm{thread}=23$).
The speedup can be higher if more threads are used.
For the \textsf{rcv1} dataset in Figure~\ref{fig:Time performance of
	logistic regression}(a), the bundle size which reflects parallelism
of PCDN is as high as 1,600. 

For the \textsf{gisette} dataset shown in
Figure~\ref{fig:Time performance of logistic regression}(b), the SCDN
method is slower than the CDN scheme.
This can be attributed to that the SCDN method is sensitive to
correlation among features.
Note that for \textsf{gisette} with 6,000 features the optimal $P^*$ for PCDN is only 20, 
which also indicates the high correlation among features. 
For the \textsf{kdda} dataset, the computational load for IPM is prohibitively long,
and not included in Figure~\ref{fig:Time performance of logistic regression}(d). 
Despite the required runtime, the IPM method achieves higher accuracy
on the \textsf{rcv1} and \textsf{real-sim} datasets.
%

Figure~\ref{fig:Time performance of logistic regression}(e) shows that 
the PCDN performs favorably against the state-of-the-art methods on the large \textsf{webspam} 
dataset which consists of  1,043,724,776  non-zero elements.
For the \textsf{kdda} dataset, 
the PCDN algorithm is slower than the SCDN method in the beginning but  converges
faster than the others in the end as shown 
in Figure \ref{fig:Time performance of logistic regression}(d).
%
%
Except for the correlation among features and memory bandwidth
limit,  another issue  that would significantly  affect the performance of PCDN is the  
workload of the parallel threads. 
For the PCDN algorithm, each thread first processes  
one feature of the data, and then switches to the next feature. 
Thus, the parallel processing time of \algname{PCDN}, which
contributes to the acceleration, is proportional to the  workload of the parallel threads. 
The workload of each thread is approximately proportional to the number of non-zero elements (NNZs) of 
the data corresponding to  the feature being processed. 
%
%
To verify that,  we compute the average number of NNZs per feature (NNZ/feature column in Table 2), 
and show that there are only 15 NNZs/feature in the \textsf{kdda} dataset, 
while there are 63 NNZs/feature  in the  \textsf{webspam} dataset. 
These results explain the performance difference of  the PCDN algorithm on these two large datasets.


%

\subsection{Scalability of PCDN}
We also evaluate the scalability of PCDN in two aspects: 
whether PCDN can maintain the speed-up when  the data size is increased, 
and whether PCDN can achieve better speed-up  
when the available computing resource (e.g., number of cores) is  increased.

To analyze the effect of data size, we maintain all the other
factors, e.g., correlation among features, the same in the
experiments. 
To this end, we  duplicate the samples to create datasets from 100\% of
original size  to 2000\%.
Figure \ref{fig:scalability-size} shows the scalability over
different  number of cores and data size. 

\setkeys{Gin}{width=0.4\textwidth}
\begin{figure}[ht]
	\centering
	\begin{tabular}{@{}c@{}c@{}}
		\includegraphics{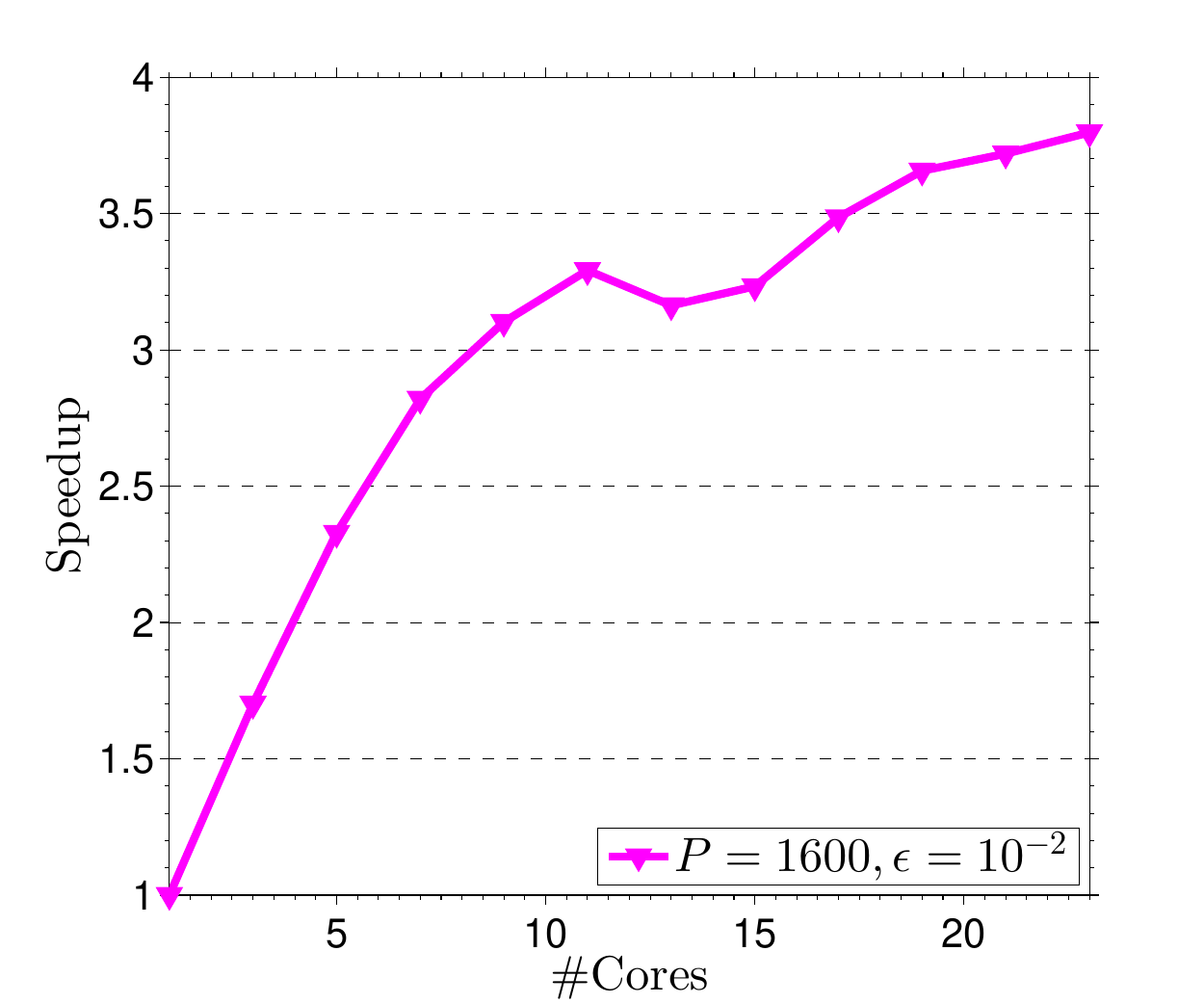} & \hspace{-0.02\textwidth}
		\includegraphics{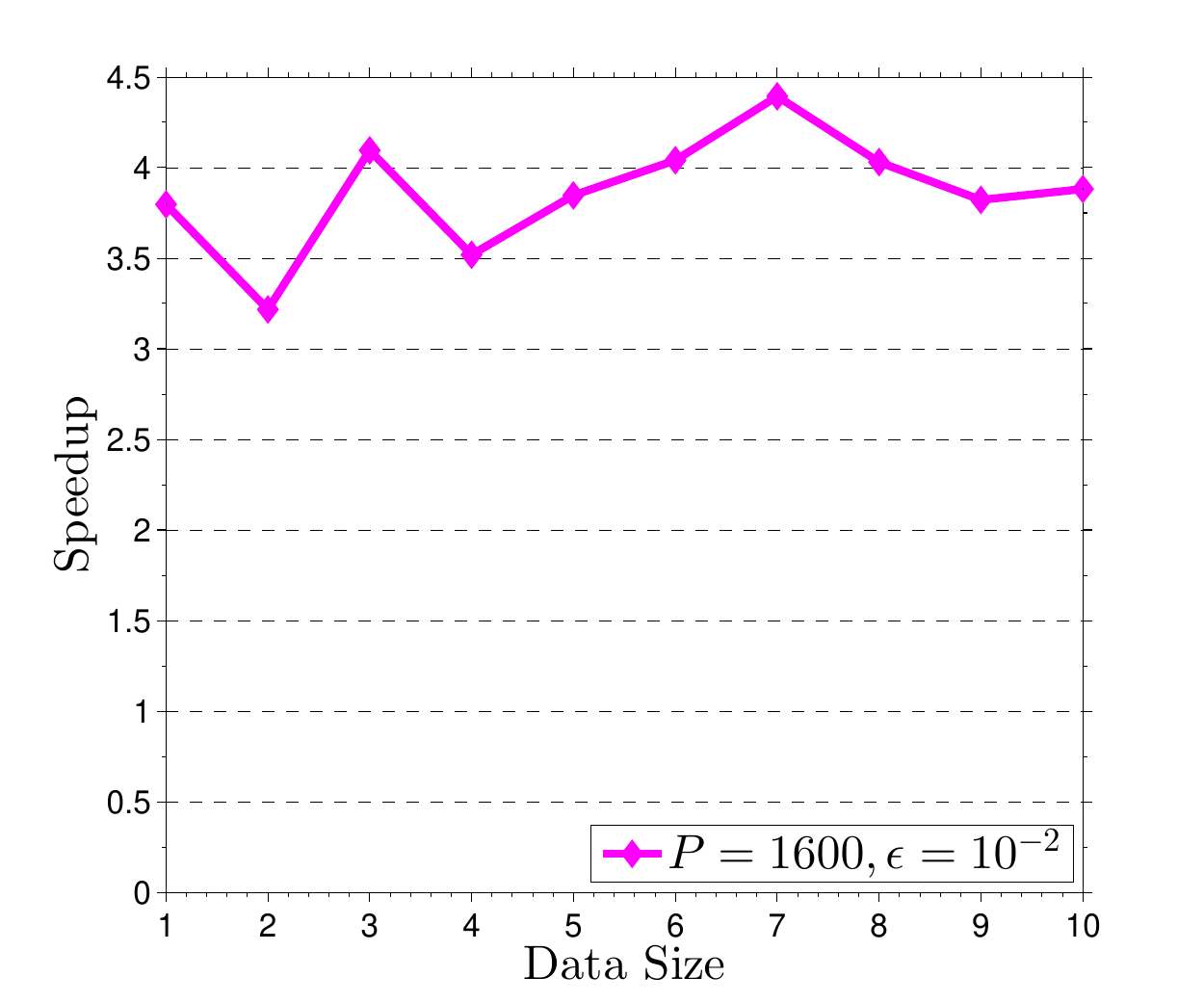} \\
		(a) different \#core & \hspace{-0.02\textwidth}
		(b) different data size
	\end{tabular}
	\caption{Speedup of PCDN on the \textsf{rcv1} dataset.}
	\label{fig:scalability-size}
\end{figure}

{\flushleft \textbf{Effect of Number of Cores.}}
Figure \ref{fig:scalability-size}(a) shows that the speedup of the PCDN algorithm 
is larger at the beginning when the number of cores is increased 
(i.e., the parallel efficiency decreases with more 
parallelism) which can be explained by the Amdahl's law: 
First, as the number of cores increase, 
the parallelized part takes less and less time. 
However, the serial part takes approximately the same constant time.
Second, with more cores, there is increasing 
parallelization overhead, e.g., more data transfer, and thereby
lowering parallel efficiency.  

{\flushleft \textbf{Effect of Data Size.}}
Figure \ref{fig:scalability-size}(b)  shows that the speedup is
approximately constant with larger data size, which shows the
weak scaling property of parallel algorithms.
It is noteworthy  that for very large dataset, the size of the data for
each feature is also quite large that it may exceed the memory
bandwidth.  

\subsection{Discussions}


The high dimensional line search plays the key role 
in ensuring global convergence of PCDN.
In this work, we use the 
Armijo line search as a specific realization, and it is worth  
exploring other ways to perform the
line search. 
In addition, the computational cost of line search
can be further reduced by deriving the (approximate) optimal line search step number, 
as what is performed for solving the dual linear SVM problem in \cite{DBLP:conf/icml/LeeR15}.  

The bundle size $P$ controls the ratio between computation 
and communication, and thus affecting the running time of PCDN.  
Although we present an empirical method to choose a good
$P$ in Section \ref{sec:datasets}, it is of great interest  
to develop a principled approach  to determine the optimal value for $P$.

Another direction to pursue is to extend the PCDN algorithm 
within a distributed framework in a way similar to the Parallel
SGD~\cite{DBLP:dblp_conf/nips/ZinkevichWSL10} and Downpour
SGD~\cite{Dean-NIPS12} methods, to 
deal with very large datasets with lots of samples,  that do not fit into one single
machine.
This can be achieved by first randomly distributing training data of different samples to
different machines (i.e., parallelizing over samples), 
and applying the PCDN algorithm over a subset (i.e., parallelizing over features)
on each machine, and 
aggregating all the models in the end.
As a shared memory parallel algorithm,  the PCDN algorithm can also be
implemented with the stale synchronous parallel
model~\cite{ssp/nips13/ZinkevichWSL10} to achieve better performance.
%

\section{Concluding Remarks}
\label{sec:conclusion}

We propose an algorithm termed Parallel Coordinate
Descent with \textit{approximate}  Newton step,  with strong convergence guarantee, fast
convergence rate and high parallelism for $L_1$-regularized
minimization problems.
We show that the seemingly expensive high dimensional line search
can be calculated efficiently with the implementation technique 
of maintaining intermediate quantities, which also minimizes 
the data transfer and synchronization cost of PCDN. 

The PCDN can be \textit{generalized} to solve the problems of minimizing the sum
of a convex twice differentiable loss term and a separable regularization term. 
Thus, it allows $L_1$ (lasso), $L_2$ (ridge regression), and mixtures of the two penalties (elastic net).
Experimental results on several benchmark datasets show that the
proposed PCDN algorithm performs favorably against several
state-of-the-art methods for $L_1$-regularized optimization problems. 

 \section*{Acknowledgments}
The authors would like to  thank Hongyuan Zha,  Xiangfeng Wang, Martin Tak{\'a}c  and Martin Jaggi 
for their valuable comments and suggestions to improve this work. This research was partially supported  by the Max Planck
ETH Center for Learning Systems


{	
\bibliography{\bibpath}
}

\clearpage 


\appendix

\begin{center}
{\LARGE \textbf   {Appendix}}
\end{center}

\section{Full Proofs of Theorems \ref{theorem:line search} and \ref{theorem:convergence rate}}

\subsection{Proof of Lemma \ref{lemma:hessian bound}(\ref{lemma:ebt})}
\begin{proof}

\textbf{(1)}\;\; We first prove  that
$\mathbf{E}_{\mathcal{B}^t}[\bar{\lambda}(\mathcal{B}^t)]$ is
monotonically increasing with respect to $P$ and
$\mathbf{E}_{\mathcal{B}^t}[\bar{\lambda}(\mathcal{B}^t)]$ is constant
with respect to $P$, if $\lambda_i$ is constant or
$\lambda_1=\lambda_2=\cdots=\lambda_n$.

Let $\lambda_k$ be the $k$-th minimum of
$(\mathbf{X}^\top \mathbf{X})_{jj}, j=1,\cdots,n$, for $ 1\leq P \leq n$.
We define
\begin{equation}
\label{equ:ebtdef}
\begin{split}
 &f(P) := \mathbf{E}_{\mathcal{B}^t}[\bar{\lambda}(\mathcal{B}^t)]=\\
  &(\lambda_n C_{n-1}^{P-1}+\cdots+\lambda_{k}
  C_{k-1}^{P-1}+\cdots+\lambda_{P} C_{P-1}^{P-1})/C_n^P,
\end{split}
\end{equation}
where $C_n^P$ is a binomial coefficient.
For $1\leq P \leq n-1$,
\begin{flalign}
\notag
&f(P+1)-f(P)       \\ \notag
& = -\lambda_P \frac{C_{P-1}^{P-1}}{C_n^P}+
\sum_{k=n}^{P+1}\lambda_k(\frac{C_{k-1}^P}{C_n^{P+1}}-\frac{C_{k-1}^{P-1}}{C_n^P})\\
\notag
& = -\lambda_P \frac{C_{P-1}^{P-1}}{C_n^P}+ \sum_{k=n}^{P+1}\lambda_k
\frac{(P+1)k-P(n+1)}{P(n-P)}\frac{C_{k-1}^{P-1}}{C_n^P}.
\end{flalign}
When $\bar{k}=\lceil\frac{(P+1)k}{P(n+1)}\rceil$, then
$(P+1)k-P(n+1)\geq 0 , \forall k \geq \bar{k}$, and $(P+1)k-P(n+1)\leq
0 , \forall k < \bar{k}$.
The above equation is equivalent to
\begin{eqnarray} \notag
&&f(P+1)-f(P)=\\\notag
&&\left[ \sum_{k=n}^{\bar{k}}\lambda_k
  \frac{(P+1)k-P(n+1)}{P(n-P)}\frac{C_{k-1}^{P-1}}{C_n^P}\right]- \\
\notag
&&\left[\sum_{k=\bar{k}}^{P+1}\lambda_k
  \frac{P(n+1)-(P+1)k}{P(n-P)}\frac{C_{k-1}^{P-1}}{C_n^P}\right]-\lambda_P
\frac{C_{P-1}^{P-1}}{C_n^P}. \notag
\end{eqnarray}
According to the observations that  $\lambda_k \geq \lambda_{\bar{k}},
\forall k \geq \bar{k}$ and  $\lambda_k \leq \lambda_{\bar{k}},
\forall k < \bar{k}$, we can decrease the above equation by substitute
$\lambda_k$ by $\lambda_{\bar{k}}$.
That is,
\begin{eqnarray} \notag
&&f(P+1)-f(P)\\\notag
&& \geq \left[ \sum_{k=n}^{\bar{k}}\lambda_{\bar{k}} \frac{(P+1)k-P(n+1)}{P(n-P)}\frac{C_{k-1}^{P-1}}{C_n^P}\right]- \\\notag
&& \left[\sum_{k=\bar{k}}^{P+1}\lambda_{\bar{k}} \frac{P(n+1)-(P+1)k}{P(n-P)}\frac{C_{k-1}^{P-1}}{C_n^P}\right]-\lambda_{\bar{k}} \frac{C_{P-1}^{P-1}}{C_n^P} \\\notag
&& = \lambda_{\bar{k}}\left[- \frac{C_{P-1}^{P-1}}{C_n^P}+ \sum_{k=n}^{P+1} \frac{(P+1)k-P(n+1)}{P(n-P)}\frac{C_{k-1}^{P-1}}{C_n^P}\right]\\\notag
&& = \lambda_{\bar{k}}\left[- \frac{C_{P-1}^{P-1}}{C_n^P}+ \sum_{k=n}^{P+1}(\frac{C_{k-1}^P}{C_n^{P+1}}-\frac{C_{k-1}^{P-1}}{C_n^P})\right]\\\notag
&& = \lambda_{\bar{k}}\left[\sum_{k=n}^{P+1}\frac{C_{k-1}^P}{C_n^{P+1}}-\sum_{k=n}^{P}\frac{C_{k-1}^{P-1}}{C_n^{P}}\right]\\\notag
&& = \lambda_{\bar{k}}[1-1]  = 0 \\\notag
\end{eqnarray}
Thus, $f(P+1)-f(P) \geq 0, \forall 1 \leq P \leq n-1$.
Namely,
$\mathbf{E}_{\mathcal{B}^t}\bar{\lambda}(\mathcal{B}^t)$ is
monotonically increasing  with respect to $P$.
Clearly, from~\eqref{equ:ebtdef}, if
$\lambda_1=\lambda_2=\cdots=\lambda_n$, then
$\mathbf{E}_{\mathcal{B}^t}[\bar{\lambda}(\mathcal{B}^t)]=\lambda_1$,
which is constant with respect to $P$.


\textbf{(2)} Next, we prove that
${\mathbf{E}_{\mathcal{B}^t}[\bar{\lambda}(\mathcal{B}^t)]}/{P}$ is
monotonically decreasing with respect to $P$.
Let $\lambda_k$ be the $k$-th minimum of
$(\mathbf{X}^\top \mathbf{X})_{jj}, j=1,\cdots,n$. 
For $1\leq P \leq n$, define
\begin{equation}\notag
\begin{split}
    &g(P):=\frac{\mathbf{E}_{\mathcal{B}^t}[\bar{\lambda}(\mathcal{B}^t)]}{P}=\\
    & \frac{1}{PC_n^P}(\lambda_n C_{n-1}^{P-1}+ 
    \cdots+\lambda_{k} C_{k-1}^{P-1}+ 
    \cdots+\lambda_{P} C_{P-1}^{P-1}).
\end{split}
\end{equation}
For $1\leq P \leq n-1$, we have
\begin{flalign} \notag
&g(P+1)-g(P)       \\\notag
& = -\lambda_P \frac{C_{P-1}^{P-1}}{PC_n^P}+ \sum_{k=n}^{P+1}\lambda_k(\frac{C_{k-1}^P}{(P+1)C_n^{P+1}}-\frac{C_{k-1}^{P-1}}{PC_n^P})\\\notag
& = -\lambda_P \frac{C_{P-1}^{P-1}}{PC_n^P}+ \sum_{k=n}^{P+1}\lambda_k \frac{k-n}{n-P}\frac{C_{k-1}^{P-1}}{PC_n^P}.
\end{flalign}
According to the observations that  $\frac{k-n}{n-P} \leq 0$ and
$\lambda_k \geq \lambda_P, \forall k=n,\cdots,P+1$, we can increase
the above equation by substituting $\lambda_k$ with $\lambda_P$. That
is
\begin{flalign} \notag
&g(P+1)-g(P)       \\\notag
& \leq -\lambda_P \frac{C_{P-1}^{P-1}}{PC_n^P}+ \sum_{k=n}^{P+1}\lambda_P \frac{k-n}{n-P}\frac{C_{k-1}^{P-1}}{PC_n^P} \\ \notag
& = \lambda_P \left[-\frac{C_{P-1}^{P-1}}{PC_n^P}+ \sum_{k=n}^{P+1} \frac{k-n}{n-P}\frac{C_{k-1}^{P-1}}{PC_n^P}\lambda_P \right]\\  \notag
& = \lambda_P \left[\frac{C_{P-1}^{P-1}}{PC_n^P}+ \sum_{k=n}^{P+1}(\frac{C_{k-1}^P}{(P+1)C_n^{P+1}}-\frac{C_{k-1}^{P-1}}{PC_n^P}) \right]\\ \notag
& = \lambda_P \left[\frac{1}{P+1}\sum_{k=n}^{P+1}\frac{C_{k-1}^P}{C_n^{P+1}}-\frac{1}{P}\sum_{k=n}^{P}\frac{C_{k-1}^{P-1}}{C_n^P} \right]\\ \notag
& = \lambda_P \left[\frac{1}{P+1}-\frac{1}{P} \right]\\ \label{equ:lambda1}
& \leq 0 , 
\end{flalign}
where  \eqref{equ:lambda1} comes from $\lambda_P \geq 0$ and
$\frac{1}{P+1}-\frac{1}{P}<0$.
Thus, $g(P+1)-g(P) \leq 0, \forall 1 \leq P \leq n-1$.
Namely,
$\frac{\mathbf{E}_{\mathcal{B}^t}[\bar{\lambda}(\mathcal{B}^t)]}{P}$
is monotonically decreasing with respect to $P$.
\end{proof}

\subsection{Proof of Lemma \ref{lemma:hessian bound}(\ref{lemma:hessian})}
\begin{proof}
\textbf{(1)} For logistic regression,
\begin{equation}
\label{equ:hessian of lr}
\begin{aligned}
    &\nabla^2_{jj}L(\mathbf{w})=
    c\sum_{i=1}^{s}\tau(y_i\mathbf{w}^\top
    \mathbf{x}_i)(1-\tau(y_i\mathbf{w}^\top \mathbf{x}_i))x_{ij}^2 ,
    \end{aligned}
\end{equation}
where $\tau(s)\equiv \frac{1}{1+e^{-s}}$ is the derivative of the
logistic loss function $\log(1+e^s)$.
Because $0<\tau(s)<1$, we have  $ 0<
\nabla^2_{jj}L(\mathbf{w}) \leq \frac{1}{4}c \sum_{i=1}^s x_{ij}^2 =
\frac{1}{4}c (\mathbf{X}^\top \mathbf{X})_{jj}$
(the equal sign holds when $\tau(s)=\frac{1}{2}$), and thus
\eqref{equ:8} holds when
$\theta = \frac{1}{4}$ for logistic regression.
As $\bar{\lambda}(\mathcal{N})$ is the
maximum element of $(\mathbf{X}^\top \mathbf{X})_{jj}$ where $j\in
\mathcal{N}$, $\nabla_{jj}^2L(\mathbf{w}) \leq \bar{h} = \theta
c \bar{\lambda}(\mathcal{N})$ in \eqref{equ:9} also holds.
In addition, because
in practice $|y_i\mathbf{w}^\top \mathbf{x}_i|<\infty$, there exist
$\bar{\tau}$ and $\underline{\tau}$ such that
$0<\underline{\tau}\leq\tau(y_i\mathbf{w}^\top \mathbf{x}_i)\leq\bar{\tau}<1$.
Thus, there exists a $\underline{h} >0$ such that $0< \underline{h}
\leq \nabla_{jj}^2L(\mathbf{w})$.

\textbf{(2)}\;For $L_2$-loss SVM, use generalized second derivative
\begin{equation}\label{equ:hessian of svm}
 2c\sum_{i \in I(\mathbf{w})}x_{ij}^2
 \leq 2c\sum_{i=1}^s x_{ij}^2 = 2c (\mathbf{X}^\top \mathbf{X})_{jj}, 
\end{equation}
where $I(\mathbf{w})=\{i\; |\; y_i \mathbf{w}^\top  x_i < 1\}$. So \eqref{equ:8} holds for $\theta
= 2$ for $L_2$-loss SVM. Because $\bar{\lambda}(\mathcal{N})$ is the
maximum element of $(\mathbf{X}^\top \mathbf{X})_{jj}$ where $j\in
\mathcal{N}$, so $\nabla_{jj}^2L(\mathbf{w}) \leq \bar{h} = \theta
c \bar{\lambda}(\mathcal{N})$ in  \eqref{equ:9} also holds.
To ensure that $\nabla^2_{jj}L(\mathbf{w}) >
0$, a very small positive number $\nu$ ($\nu= 10^{-12}$ ) is added when
$\nabla^2_{jj}L(\mathbf{w}) \leq 0$ according to
\cite{DBLP:journals/jmlr/ChangHL08}. 
Thus, $\underline{h}=\nu >0$.
\end{proof}

\subsection{Proof of Lemma \ref{lemma:hessian bound}(\ref{lemma:delta})}
\begin{proof}
We follow the proof in \cite{DBLP:journals/mp/TsengY09}, from
\eqref{equ d} and the convexity of $L_1$-norm, for any $\alpha \in
(0,1)$,
\begin{equation}\notag
\begin{aligned}
    &\nabla L(\mathbf{w})^\top \mathbf{d}+\frac{1}{2}\mathbf{d}^\top 
    \mathbf{H}\mathbf{d}+\|\mathbf{w}+\mathbf{d}\|_1 \\
    &\leq \nabla
    L(\mathbf{w})^\top (\alpha\mathbf{d})+\frac{1}{2}(\alpha\mathbf{d})^\top 
    \mathbf{H}(\alpha\mathbf{d})+\|\mathbf{w}+(\alpha\mathbf{d})\|_1 \\
    &= \alpha \nabla
    L(\mathbf{w})^\top \mathbf{d}+\frac{1}{2}\alpha^2\mathbf{d}^\top 
    \mathbf{H}\mathbf{d}+\|\alpha(\mathbf{w}+\mathbf{d})+(1-\alpha)\mathbf{w}\|_1
    \\
    & \leq \alpha \nabla
    L(\mathbf{w})^\top \mathbf{d}+\frac{1}{2}\alpha^2\mathbf{d}^\top 
    \mathbf{H}\mathbf{d}+\alpha\|\mathbf{w}+\mathbf{d}\|_1+(1-\alpha)\|
    \mathbf{w}\|_1 .
     \end{aligned}
\end{equation}
After rearranging these terms, we have
\begin{equation}\notag
\begin{split}
    &(1-\alpha) \nabla  L(\mathbf{w})^\top \mathbf{d}+
    (1-\alpha)(\|\mathbf{w}+\mathbf{d}\|_1-\|\mathbf{w}\|_1)\\
    &\leq -\frac{1}{2}(1-\alpha)(1+\alpha)\mathbf{d}^\top
    \mathbf{H}\mathbf{d} . 
\end{split}
\end{equation}
Dividing both sides by $1-\alpha >0$ and taking $\alpha$ infinitely
approaching 0  yields
\begin{equation}\notag
\nabla  L(\mathbf{w})^\top \mathbf{d}+
\|\mathbf{w}+\mathbf{d}\|_1-\|\mathbf{w}\|_1 \leq -\mathbf{d}^\top 
\mathbf{H}\mathbf{d},
\end{equation}
and thus
\begin{equation} \label{equ:sup-delta}
\begin{split}
\Delta  = & \nabla  L(\mathbf{w})^\top \mathbf{d}+ \gamma\mathbf{d}^\top 
\mathbf{H}\mathbf{d} + \|\mathbf{w}+\mathbf{d}\|_1-\|\mathbf{w}\|_1\\
  \leq & (\gamma-1)\mathbf{d}^\top  \mathbf{H}\mathbf{d} , 
\end{split}
\end{equation}
which proves \eqref{equ:upper-delta}.
%
From the Armijo rule in \eqref{equ:qrmijo} we have
\begin{equation}\notag
\F(\mathbf{w}+\alpha \mathbf{d})- \F(\mathbf{w})\leq \sigma \alpha
\Delta.
\end{equation}
By substituting  \eqref{equ:sup-delta} into the above equation and
considering that $\gamma \in [0,1)$ we obtain
\begin{equation}\notag
\F(\mathbf{w}+\alpha \mathbf{d})- \F(\mathbf{w})\leq \sigma \alpha
(\gamma-1)\mathbf{d}^\top  \mathbf{H}\mathbf{d} \leq 0.
\end{equation}
 Hence $\{\F(\mathbf{w}^t)\}$ is nonincreasing.
\end{proof}

\subsection{Proof of Theorem \ref{theorem:line search}: Convergence
  of $P$-dimensional line search}
\begin{proof}

\textbf{(1)} First, we prove that 
 the descent condition in
\eqref{equ:qrmijo} $\F(\mathbf{w}+\alpha
\mathbf{d})-\F(\mathbf{w})\leq \sigma\alpha \Delta$ is
satisfied for any $\sigma \in (0,1)$ whenever $0 \leq \alpha \leq
\min \left\{1,\frac{2\underline{h}(1-\sigma+\sigma\gamma)}{\theta c
    \sqrt{P}\bar{\lambda}(\mathcal{B}^t)} \right\}$.

For any $\alpha \in [0,1]$,
\begin{flalign}
\notag & \F(\mathbf{w}+\alpha \mathbf{d})-\F(\mathbf{w}) \\ \notag
=&  L(\mathbf{w}+\alpha \mathbf{d})-L(\mathbf{w}) + \|\mathbf{w}+\alpha \mathbf{d}\|_1-\|\mathbf{w}\|_1\\
\label{equ:definition-definite-integration} =&  \int_0^1\nabla L(\mathbf{w}+u\alpha \mathbf{d})^\top (\alpha \mathbf{d})du \\
\notag  & + \|\mathbf{w}+\alpha \mathbf{d}\|_1-\|\mathbf{w}\|_1 \\
\notag =&  \alpha \nabla L(\mathbf{w})^\top  \mathbf{d} + \|\mathbf{w}+\alpha \mathbf{d}\|_1-\|\mathbf{w}\|_1  \\
\label{equ:integration}  & + \int_0^1(\nabla L(\mathbf{w}+u\alpha
\mathbf{d})-\nabla L(\mathbf{w}))^\top (\alpha \mathbf{d})du ,  \\ \notag 
\end{flalign}
where~\eqref{equ:definition-definite-integration} is based on the
definition of definite integration.  
Because in the $t$-th iteration of PCDN,  $d_j=0, \forall j \not\in
\mathcal{B}^t$, we define auxiliary matrix $\mathbf{G} \in
\mathbb{R}^{n*n}$ such that $g_{jj}=1, \forall j \in \mathcal{B}^t$,
otherwise $g_{jj}=0$.
Then we have
\begin{equation}\label{equ:E}
\begin{split}
  (\nabla L(\mathbf{w}+u\alpha \mathbf{d})-\nabla L(\mathbf{w}))^\top (\alpha \mathbf{d}) =\\
   (\mathbf{G} \cdot (\nabla L(\mathbf{w}+u\alpha \mathbf{d})-\nabla
   L(\mathbf{w})))^\top (\alpha \mathbf{d}) . 
\end{split}
\end{equation}
Substituting  \eqref{equ:E} into \eqref{equ:integration} we obtain
\begin{flalign}
\notag & \F(\mathbf{w}+\alpha \mathbf{d})-\F(\mathbf{w}) \\ \notag
& = \alpha \nabla L(\mathbf{w})^\top  \mathbf{d} + \|\mathbf{w}+\alpha \mathbf{d}\|_1-\|\mathbf{w}\|_1 + \\
\notag &\int_0^1(\mathbf{G} \cdot (\nabla L(\mathbf{w}+u\alpha \mathbf{d})-\nabla L(\mathbf{w})))^\top (\alpha \mathbf{d})du \\ \notag
& \leq \alpha \nabla L(\mathbf{w})^\top  \mathbf{d} + \alpha(\|\mathbf{w}+ \mathbf{d}\|_1-\|\mathbf{w}\|_1) + \\\label{equ:conv}
&\alpha\int_0^1\|\mathbf{G} \cdot (\nabla L(\mathbf{w}+u\alpha
\mathbf{d})-\nabla L(\mathbf{w}))\|\|\mathbf{d}\|du , 
\end{flalign}
where  \eqref{equ:conv} is from the convexity of $L_1$-norm and the
Cauchy-Schwarz inequality.
It follows that
\begin{flalign}
\notag & \|\mathbf{G} \cdot (\nabla L(\mathbf{w}+u\alpha \mathbf{d})-\nabla L(\mathbf{w}))\| \\
\notag & = \sqrt{\sum_{j\in \mathcal{B}^t} (\nabla L(\mathbf{w}+u\alpha \mathbf{d})-\nabla L(\mathbf{w}))^2} \\
\notag & \leq u\alpha \sqrt{\sum_{j\in \mathcal{B}^t} (\nabla^2_{jj}L(\bar{\mathbf{w}}))^2}\| \mathbf{d}\|\\ \label{3}
& \leq u\alpha \sqrt{P(\theta c \bar{\lambda}(\mathcal{B}^t))^2}\| \mathbf{d}\|\\ \notag
& = u\alpha \theta c \sqrt{P}\bar{\lambda}(\mathcal{B}^t)\| \mathbf{d}\|,
\end{flalign}
where $\bar{\mathbf{w}}=v(\mathbf{w}+u\alpha
\mathbf{d})+(1-v)\mathbf{w}, 0\leq v \leq 1$.
We note \eqref{3} results from  Lemma \ref{lemma:hessian bound}(\ref{lemma:hessian}).
By substituting the above inequality into \eqref{equ:conv}
we have
\begin{eqnarray} \notag
 &&\F(\mathbf{w}+\alpha \mathbf{d})-\F(\mathbf{w}) \\ \notag
 &\leq & \alpha \nabla L(\mathbf{w})^\top  \mathbf{d} +
 \alpha(\|\mathbf{w}+ \mathbf{d}\|_1-\|\mathbf{w}\|_1) + \\\notag
&&\alpha^2 \theta c \sqrt{P}\bar{\lambda}(\mathcal{B}^t)
\int_0^1u\|\mathbf{d}\|^2dt\\ \notag
& =& \alpha( \nabla L(\mathbf{w})^\top  \mathbf{d} +\|\mathbf{w}+
\mathbf{d}\|_1-\|\mathbf{w}\|_1) + \\\notag
&&\frac{\alpha^2 \theta c \sqrt{P}\bar{\lambda}(\mathcal{B}^t)}{2}
\|\mathbf{d}\|^2\\\notag
\notag & = &\alpha( \nabla L(\mathbf{w})^\top  \mathbf{d} + \gamma \mathbf{d}^\top 
\mathbf{H}\mathbf{d} +\|\mathbf{w}+ \mathbf{d}\|_1-\|\mathbf{w}\|_1) +\\
\notag &&\frac{\alpha^2 \theta c \sqrt{P}\bar{\lambda}(\mathcal{B}^t)}{2}
\|\mathbf{d}\|^2-\alpha\gamma \mathbf{d}^\top  \mathbf{H}\mathbf{d}\\
\label{5} & = &\alpha \Delta +\frac{\alpha^2 \theta c \sqrt{P}\bar{\lambda}(\mathcal{B}^t)}{2}
\|\mathbf{d}\|^2-\alpha\gamma \mathbf{d}^\top  \mathbf{H}\mathbf{d} .
\end{eqnarray}
If we set $\alpha \leq
\frac{2\underline{h}(1-\sigma+\sigma\gamma)}{\theta c
  \sqrt{P}\bar{\lambda}(\mathcal{B}^t)}$, then

\begin{flalign} \notag
& \frac{\alpha^2 \theta c \sqrt{P}\bar{\lambda}(\mathcal{B}^t)}{2}
\|\mathbf{d}\|^2-\alpha\gamma \mathbf{d}^\top  \mathbf{H}\mathbf{d} \\\notag
&\leq \alpha
(\underline{h}(1-\sigma+\sigma\gamma)\|\mathbf{d}\|^2-\gamma
\mathbf{d}^\top  \mathbf{H}\mathbf{d})\\\label{equ:eq9}
& \leq \alpha ((1-\sigma+\sigma\gamma)\mathbf{d}^\top  \mathbf{H}\mathbf{d}-\gamma
\mathbf{d}^\top  \mathbf{H}\mathbf{d})\\\notag
& = \alpha (1-\sigma)(1-\gamma)\mathbf{d}^\top  \mathbf{H}\mathbf{d}\\ \label{4}
& \leq -\alpha (1-\sigma)\Delta , \\ \notag
\end{flalign}
where \eqref{equ:eq9} comes from \eqref{equ:9} in Lemma
\ref{lemma:hessian bound}(\ref{lemma:hessian}) and
\eqref{4} is based on Lemma \ref{lemma:hessian
  bound}(\ref{lemma:delta}).
The above equation together with \eqref{5} proves that
$\F(\mathbf{w}+\alpha \mathbf{d})-\F(\mathbf{w})\leq
\sigma\alpha\Delta$ if $\alpha \leq
\frac{2\underline{h}(1-\sigma+\sigma\gamma)}{\theta c
  \sqrt{P}\bar{\lambda}(\mathcal{B}^t)}$.

\textbf{(2)} We prove the upper bound of $\mathbf{E}[q^t]$.
In the Armijo line search procedure, it tests
different values of $\alpha$ from larger to smaller, and stops right
after finding one value that satisfy $\F(\mathbf{w}^t+\alpha^t
\mathbf{d}^t)-\F(\mathbf{w}^t)\leq \sigma\alpha^t\Delta^t$.
Thus in the $t$-th iteration, the chosen step size $\alpha^t$
satisfies
\begin{equation}\label{equ:alpha-geq}
\alpha^t \geq \frac{2\underline{h}(1-\sigma+\sigma\gamma)}{\theta c
  \sqrt{P}\bar{\lambda}(\mathcal{B}^t)}. 
\end{equation}
From \eqref{equ:qrmijo} we have $\alpha^t=\beta^q$,  and thus the
line search step number of the $t$-th iteration $q^t$
\begin{equation}\label{}
q^t = 1+ \log_{\beta}\alpha^t \leq 1+ \log_{\beta^{-1}}\frac{\theta c
  \sqrt{P}\bar{\lambda}(\mathcal{B}^t)}{2\underline{h}(1-\sigma+\sigma\gamma)}. 
\end{equation}
Taking expectation on both sides with respect to the random choices of
$\mathcal{B}^t$, we obtain
\begin{eqnarray}\notag
   \mathbf{E}[q^t] & \leq & 1+ \log_{\beta^{-1}}\frac{\theta c}{2\underline{h}(1-\sigma+\sigma\gamma)} + \frac{1}{2}\log_{\beta^{-1}}P + \\\notag
    &&\mathbf{E}_{\mathcal{B}^t}[\log_{\beta^{-1}}\bar{\lambda}(\mathcal{B}^t)]   \\\notag
    &\leq &1+ \log_{\beta^{-1}}\frac{\theta c}{2\underline{h}(1-\sigma+\sigma\gamma)} + \frac{1}{2}\log_{\beta^{-1}}P + \\\label{equ:line steps}
     &&\log_{\beta^{-1}}\mathbf{E}_{\mathcal{B}^t}[\bar{\lambda}(\mathcal{B}^t)], 
\end{eqnarray}
where~\eqref{equ:line steps} is based on Jensen's inequality for
concave function $\log_{\beta^{-1}}(\cdot)$.
\end{proof}

\subsection{Proof of
Global  convergence}\label{proof-global-convergence}
\begin{proof}

\textbf{(1)} We first relate  PCDN to the  framework 
in ~\cite{DBLP:journals/mp/TsengY09}. 
%
Note that the selection of bundle $\mathcal{B}^t$ in~\eqref{equ:gauss-seidel} is consistent with that used in CGD
(i.e., (12) in~\cite{DBLP:journals/mp/TsengY09}).
For the descent
direction computed in a bundle in Algorithm~\ref{alg:pcdn}, we have
\begin{flalign} \notag
 &\mathbf{d}^t  = \sum_{j\in
    \mathcal{B}^t}d(\mathbf{w}^t;j)\mathbf{e}_j  \\ \label{equ: from
    direction}
 &\!\!=\!\! \sum_{j\in \mathcal{B}^t} \!\!\arg \min_{d}
    \{\nabla_j L(\mathbf{w}^t)^\top d+\frac{1}{2}\nabla_{jj}^2L(\mathbf{w}^t)d^2+|w_j^t+d|\}\mathbf{e}_j\\
\notag    &\!\!= \arg \min_{\mathbf{d}}
    \{\sum_{j\in \mathcal{B}^t}(\nabla_j L(\mathbf{w}^t)^\top d_j+\frac{1}{2}\nabla_{jj}^2L(\mathbf{w}^t)d_j^2+|w_j^t+d_j|)\\
    \notag & \;\;\;\;\;\;\;\;\;\;\;\;\;\;\;\;\;\;\;\;\;\;\;\;\;\;\;\;\;\; | \;\;  d_j = 0,   \forall j\not\in \mathcal{B}^t\} \\
\label{equ:set-h} &=\arg \min_{\mathbf{d}}\{ \nabla L(\mathbf{w}^t)^\top \mathbf{d}+\frac{1}{2}\mathbf{d}^\top 
 \mathbf{H}\mathbf{d}+\|\mathbf{w}+\mathbf{d}\|_1 \\
\notag & \;\;\;\;\;\;\;\;\;\;\;\;\;\;\;\;\;\;\;\;\;\;\;\;\;\;\;\;\;\; | \;\;  d_j = 0,   \forall j\not\in \mathcal{B}^t\} \\ \label{equ: definition of dh}
  &\!\!\equiv  \mathbf{d}_{\mathbf{H}}(\mathbf{w}^t;\mathcal{B}^t), \\ \notag
\end{flalign}
where~\eqref{equ: from direction} is derived by considering the
definition of $d(\mathbf{w};j)$ in~\eqref{equ d};~\eqref{equ:set-h} is obtained by applying the setting of $\mathbf{H}\equiv
\mathrm{diag}(\nabla^2L(\mathbf{w}))$;~\eqref{equ: definition of dh}
is defined by following the descent direction definition of Tseng et al.
(i.e., (6) in~\cite{DBLP:journals/mp/TsengY09}). Therefore the definition
of direction computed is in a manner similar to CGD.
Furthermore, since PCDN uses  the Armijo line search for
$\mathbf{d}^{t}$,by
taking $\mathbf{H}\equiv \mathrm{diag}(\nabla^2L(\mathbf{w}))$,
 it is clear that we can use the framework in \cite{DBLP:journals/mp/TsengY09} to analyze the global convergence of  PCDN.

\textbf{(2)} We use Theorem 1(e) in~\cite{DBLP:journals/mp/TsengY09} to
prove the global convergence, which requires that  $\{\mathcal{B}^t\}$
is chosen under the Gauss-Seidel rule and $\sup_t \alpha^t<\infty$. In
\eqref{equ:qrmijo}, $\alpha ^t \leq 1, t=1,2,...$, which satisfies
$\sup_t \alpha ^t<\infty$.
To ensure global convergence, Tseng et al. make the following assumption,
\begin{equation}\nonumber
0< \underline{h} \leq \nabla_{jj}^2L(\mathbf{w}^t) \leq \bar{h}, \
\forall j=1,\cdots,n, t=0,1,\ldots
\end{equation}
which is fulfilled by Lemma \ref{lemma:hessian
  bound}(\ref{lemma:hessian}). According to Theorem 1(e)
in~\cite{DBLP:journals/mp/TsengY09}, any cluster point of
$\{\mathbf{w}^t\}$ is a stationary point of $\F(\mathbf{w})$.
\end{proof}

\subsection{Proof of Theorem \ref{theorem:convergence rate}:
Convergence rate}
To analyze the convergence rate, we transform \eqref{equ:formal l1}
into an equivalent problem with a twice differentiable regularizer
following \cite{DBLP:conf/icml/Shalev-ShwartzT09}.
Let
$\hat{\mathbf{w}}\in \mathbb{R}^{2n}_{+}$ with duplicated
features\footnote{Although our analysis uses duplicate
  features, they are not required for an implementation.}
$\hat{\mathbf{x}}_i \equiv [\mathbf{x}_i;-\mathbf{x}_i]\in
\mathbb{R}^{2n}$, the problem becomes
\begin{equation}\label{equ:duplicated features}
\min_{\hat{\mathbf{w}}\in \mathbb{R}^{2n}_{+}}
\F(\hat{\mathbf{w}})\equiv c\sum_{i=1}^s \varphi(\hat{\mathbf{w}};
\hat{\mathbf{x}}_i,y_i)+ \sum_{j=1}^{2n}\hat{\mathbf{w}}_j .
\end{equation}
The descent direction is computed by
\begin{equation} \label{equ:dd}
\begin{split}
    & \hat{d}_j = \hat{d}(\hat{\mathbf{w}};j)\equiv \\
    &\arg \min_{\hat{d}}
    \{\nabla_j
    L(\hat{\mathbf{w}})\hat{d}+\frac{1}{2}\nabla_{jj}^2L
    (\hat{\mathbf{w}})\hat{d}^2+\hat{w}_j+\hat{d}\}\\
    &= -(\nabla_j
    L(\hat{\mathbf{w}})+1)/\nabla_{jj}^2L(\hat{\mathbf{w}}) . 
\end{split}
\end{equation}
In the following proof we omit the ``$\wedge$'' above each variables
for ease of presentation.

\begin{proof}
Assume that $\mathbf{w}^*$ minimizes the objective in \eqref{equ:duplicated features}.
Define the potential function as
\begin{equation}\label{equ:potential function}
\begin{split}
    &\Psi (\mathbf{w}) \equiv \\
     &\frac{\theta c \bar{\lambda}(\mathcal{B}^t)}{2}
     \|\mathbf{w}-\mathbf{w}^{*}\|^2 + \frac{\theta c
       \bar{\lambda}(\mathcal{B}^t)\sup_t \alpha^t}{2\sigma
       (1-\gamma)\underline{h}}\F(\mathbf{w}) \\
    &= a \|\mathbf{w}-\mathbf{w}^{*}\|^2 + b \F(\mathbf{w}) , 
\end{split}
\end{equation}
where
\begin{equation*}
  a=\frac{\theta c \bar{\lambda}(\mathcal{B}^t)}{2}, \quad
b=\frac{\theta c \bar{\lambda}(\mathcal{B}^t)\sup_t \alpha^t}{2\sigma
  (1-\gamma)\underline{h}} . 
\end{equation*}
Thus, we have
\begin{flalign}
    \notag &\Psi (\mathbf{w})-\Psi (\mathbf{w}+\alpha \mathbf{d}) = \\ \notag
     &a(\|\mathbf{w}-\mathbf{w}^{*}\|^2-\|\mathbf{w}+\alpha
     \mathbf{d}-\mathbf{w}^{*}\|^2)+b(\F(\mathbf{w})-\F(\mathbf{w}+\alpha
     \mathbf{d})) \\ \notag
     &=a\alpha(-2  \mathbf{w}^\top \mathbf{d}+2{\mathbf{w}^*}^\top \mathbf{d}-\alpha
     \mathbf{d}^\top \mathbf{d})+b (\F(\mathbf{w})-\F(\mathbf{w}+\alpha
     \mathbf{d}))\\
 \label{equ: using lemma delta}    &\geq a\alpha(-2 \mathbf{w}^\top \mathbf{d}+2{\mathbf{w}^*}^\top \mathbf{d}-\alpha
     \mathbf{d}^\top \mathbf{d})+b\sigma\alpha
     (1-\gamma)\mathbf{d}^\top \mathbf{H}\mathbf{d} , 
\end{flalign}
where  \eqref{equ: using lemma delta} uses \eqref{equ:upper-delta} and \eqref{equ:fcdescent} in Lemma \ref{lemma:hessian
  bound}(\ref{lemma:delta}). Using the fact that $d_j=0, \forall j \not\in \mathcal{B}^t$, we derive from \eqref{equ: using lemma delta} that
\begin{flalign}
\notag &\Psi (\mathbf{w})-\Psi (\mathbf{w}+\alpha \mathbf{d})  \\
 \notag    & \geq \sum_{j\in \mathcal{B}^t} a\alpha(-2 w_j d_j+2w_j^*d_j-\alpha
     d_j^2)+b\sigma\alpha (1-\gamma)\nabla_{jj}^2L(\mathbf{w})d_j^2\\
     \notag
     & =  \sum_{j\in \mathcal{B}^t} a\alpha(-2 w_j
     d_j+2w_j^*d_j)+\alpha
     [b\sigma(1-\gamma)\nabla_{jj}^2L(\mathbf{w})-a\alpha]d_j^2\\
     \label{equ:haha}
     & \geq   \sum_{j\in \mathcal{B}^t} a\alpha(-2 w_j+2w_j^*)d_j , 
\end{flalign}
and \eqref{equ:haha} uses the fact that
\begin{equation}\notag
\begin{split}
&b\sigma(1-\gamma)\nabla_{jj}^2L(\mathbf{w})-a\alpha\\
&= \frac{\theta c \bar{\lambda}(\mathcal{B}^t)}{2} \left [ \frac{\nabla_{jj}^2L(\mathbf{w})\sup_t \alpha^t}{\underline{h}}-\alpha \right ] \\
& \geq \frac{\theta c \bar{\lambda}(\mathcal{B}^t)}{2} ( \sup_t \alpha^t -\alpha )\geq 0. 
\end{split}
\end{equation}
By substituting $a=\frac{\theta c \bar{\lambda}(\mathcal{B}^t)}{2}$ and
$d_j=-(\nabla_j L(\mathbf{w})+1)/\nabla_{jj}^2L(\mathbf{w})$ (See
\eqref{equ:dd}) into
\eqref{equ:haha}, we have the following equations
\begin{flalign}
\notag &\Psi (\mathbf{w})-\Psi (\mathbf{w}+\alpha \mathbf{d})  \\
\label{equ:substitute} & \geq \sum_{j\in \mathcal{B}^t}\frac{\theta c
       \bar{\lambda}(\mathcal{B}^t)\alpha}{\nabla_{jj}^2L(\mathbf{w})}
     (w_j -w_j^*)(\nabla_j
     L(\mathbf{w})+1)\\  \label{equ:using_lemma2_again}
     & \geq \sum_{j\in
       \mathcal{B}^t}\frac{\bar{\lambda}(\mathcal{B}^t)\alpha}{(\mathbf{X}^\top 
       \mathbf{X})_{jj}}
     (w_j -w_j^*)(\nabla_j L(\mathbf{w})+1)   \\ \label{equ:definition
       of labmda}
     & \geq \alpha \sum_{j\in \mathcal{B}^t}(w_j -w_j^*)(\nabla_j
     L(\mathbf{w})+1) . 
\end{flalign}
We note \eqref{equ:using_lemma2_again} is based on 
Lemma \ref{lemma:hessian bound}(\ref{lemma:hessian}) and
\eqref{equ:definition of labmda} results from the definition of
$\bar{\lambda}(\mathcal{B}^t)$.

Taking the expectation with respect to the random choices of
$\mathcal{B}^t$ on both sides of \eqref{equ:definition of labmda} we
have
\begin{flalign} \notag
    &\mathbf{E}_{\mathcal{B}^t}[\Psi (\mathbf{w})-\Psi
    (\mathbf{w}+\alpha \mathbf{d})]\\
 \notag    & \geq \inf_t \alpha^t\mathbf{E}_{\mathcal{B}^t}[\sum_{j\in
       \mathcal{B}^t}(w_j -w_j^*)(\nabla_j L(\mathbf{w})+1)]\\ \notag
     & = \inf_t \alpha^t P \mathbf{E}_{j}\left[(w_j -w_j^*)(\nabla_j
       L(\mathbf{w})+1)\right]\\  \notag
     & = \inf_t \alpha^t \frac{P}{2n} (\mathbf{w}
     -\mathbf{w}^*)(\nabla
     L(\mathbf{w})+\mathbf{1})\\ \label{equ:convexity of l}
     & \geq   \inf_t \alpha^t \frac{P}{2n}
     (\F(\mathbf{w})-\F(\mathbf{w}^*)) , 
\end{flalign}
where
\eqref{equ:convexity of l} comes from the convexity of
$L(\mathbf{w})$.

By summing over $T+1$ iterations on both sides of
\eqref{equ:convexity of l}, with an expectation over the random
choices of $\mathcal{B}^t$, we obtain, 
\begin{flalign} \notag
    &\mathbf{E}[\sum_{t=0}^\top \Psi (\mathbf{w}^t)-\Psi
    (\mathbf{w}^{t+1}] \\ \notag
     & \geq   \inf_{t}\alpha^t\frac{P}{2n}\mathbf{E} [\sum_{t=0}^\top 
     \F(\mathbf{w}^t)-\F(\mathbf{w}^*)]\\ \notag
     & = \inf_t\alpha^t \frac{P}{2n} [\mathbf{E}\sum_{t=0}^\top [
     \F(\mathbf{w}^t)]-(T+1)\F(\mathbf{w}^*)]\\ \label{equ:noincreasing}
     & \geq \inf_t\alpha^t\frac{P(T+1)}{2n}
     [\mathbf{E}[\F(\mathbf{w}^\top )]-\F(\mathbf{w}^*)] , 
\end{flalign}
where \eqref{equ:noincreasing} comes from Lemma \ref{lemma:hessian
  bound}(\ref{lemma:delta}) that $\{\F(\mathbf{w}^t)\}$ is nonincreasing.
From \eqref{equ:alpha-geq} we can bound $\alpha^t$ by some positive
constant $\xi  = \frac{2\underline{h}(1-\sigma+\sigma\gamma)}{\theta c
  \sqrt{P}\bar{\lambda}(\mathcal{B}^t)} $,
\begin{equation}\label{equ:bound-alpha}
0 < \xi \leq \alpha^t \leq 1 . 
\end{equation}
Substituting \eqref{equ:bound-alpha} into
\eqref{equ:noincreasing}, we have
\begin{flalign} \notag
 \mathbf{E} \left[\sum_{t=0}^\top \Psi (\mathbf{w}^t)-\Psi
    (\mathbf{w}^{t+1}\right] 
  \geq \xi \frac{P(T+1)}{2n}[\mathbf{E}[\F(\mathbf{w}^\top )]-\F(\mathbf{w}^*)].
\end{flalign}
By rearranging the above inequality, we have
\begin{flalign} \notag
    &\mathbf{E}[\F(\mathbf{w}^\top )]-\F(\mathbf{w}^*)\\ \notag
    & \leq \frac{2n}{\xi P(T+1)}
    \mathbf{E}[\sum_{t=0}^\top \Psi (\mathbf{w}^t)-\Psi
    (\mathbf{w}^{t+1})] \\ \notag
     & \leq \frac{2n}{\xi P(T+1)}  \mathbf{E}[\Psi
     (\mathbf{w}^0)-\Psi (\mathbf{w}^{T+1})] \\\label{equ:positive of
       phy}
     & \leq \frac{2n}{\xi P(T+1)}  \mathbf{E}[\Psi
     (\mathbf{w}^0)] \\\label{equ:initial}
     & =
     \frac{2n\mathbf{E}_{\mathcal{B}^t}\bar{\lambda}(\mathcal{B}^t)}
     {\xi P(T+1)}  \left[\frac{\theta
         c}{2}(\|\mathbf{w}^{*}\|^2)+\frac{\theta c \sup_t
         \alpha^t}{2\sigma
         (1-\gamma)\underline{h}}(\F(\mathbf{0}))\right]\\ \label{equ:alpha-leq-1}
         & \leq 
          \frac{2n\mathbf{E}_{\mathcal{B}^t}\bar{\lambda}(\mathcal{B}^t)}
     { P(T+1)}  \cdot \frac{\theta c}{2\xi}  \left[\|\mathbf{w}^{*}\|^2+\frac{\F(\mathbf{0})}{\sigma
         (1-\gamma)\underline{h}}\right] ,
\end{flalign}
where~\eqref{equ:positive of phy} comes from that $\Psi
(\mathbf{w}^{T+1})\geq 0$, and~\eqref{equ:initial} is because $\mathbf{w}^0$ is set to be $\mathbf{0}$, (\ref{equ:alpha-leq-1}) holds since $\alpha^t \leq 1$.
\end{proof}

\section{Computational Complexities of PCDN and CDN}
\label{sec:cost-pcdn-cdn}

The proposed PCDN algorithm takes much less time for each outer iteration than the CDN method.
We  analyze the computational complexity of PCDN for 
the $k$-th outer iteration, $\text{time}(k)$ to demonstrate this point (note that CDN is a special case of PCDN with bundle size $P=1$).

Let $t_{dc}$ denote
the time complexity for computing the descent direction (step~\ref{pcdn:dc}
in
Algorithm~\ref{alg:pcdn}),
and $t_{ls}$ denote the time complexity for a step of
$P$-dimensional line search, which is approximately constant with
varying $P$ (See the discussions below).
When the computation of descent directions (step~\ref{pcdn:dc-before}
in Algorithm~\ref{alg:pcdn}) is fully parallelized, $\mathrm{time}(k)$
can be estimated by
\begin{equation}
\label{equ:timek}
\mathbf{E}[\mathrm{time}(k)] \approx \lceil n/P \rceil \cdot
t_{dc}+\lceil n/P\rceil \cdot \mathbf{E}[q^t] \cdot t_{ls},
\end{equation}
where the expectation is with respect to the random choice of $\mathcal{B}^t$, 
and $q^t$ is the number of line search steps in the $t$-th
iteration.
As indicated in~\eqref{equ:timek},
the computational complexity of descent directions $\lceil n/P \rceil
\cdot t_{dc}$ decreases linearly with the increase of bundle size
$P$.
For the cost of Armijo line search,
when  approximately estimating  $\mathbf{E}[q^t]$ by its upper bound in
Theorem~\ref{theorem:line search}, $\mathbf{E}[q^t]/P$
decreases with respect to $P$\footnote{Using the upper bound in
  Theorem~\ref{theorem:line search}, and the fact that
${\mathbf{E}_{\mathcal{B}^t}[\bar{\lambda}(\mathcal{B}^t)]}/{P}$ is
monotonically decreasing with respect to $P$ in Lemma 
\ref{lemma:hessian bound}(\ref{lemma:ebt}), we can easily obtain this.}, 
and thus 
$\lceil n/P\rceil \cdot \mathbf{E}[q^t] \cdot t_{ls}$ decreases with the increase of bundle size $P$. 
The overall computational complexity of PCDN's each outer iteration is
lower than that  of the CDN method.

We show that
the time complexity of one step of $P$-dimensional line search
 $t_{ls}$ remains approximately constant with varying
bundle size $P$.
%
%
The reason being that
in each line search step of Algorithm~\ref{alg:armijo},
the time complexity remains constant with respect to $P$.
The difference of the whole line search procedure results from
computing $\mathbf{d}^\top \mathbf{x}_i = \sum_{j=1}^{P}d_j x_{ij}$.
However,
$\mathbf{d}^\top  \mathbf{x}_i$ in the PCDN algorithm can be computed in
parallel with $P$ threads as well as a reduction-sum operation, and
thus the computational complexity remains approximately constant.

\end{document}